\DeclareMathOperator*{\argmax}{arg\,max}
\newcommand{\R}{\mathbb{R}}
\newcommand{\N}{\mathbb{N}}
\newcommand{\PSI}{\mathbf{\Psi}}
\DeclareMathOperator*{\argmin}{arg\,min~}
\let\argmax\relax
\DeclareMathOperator*{\argmax}{arg\,max~}
\definecolor{boxback}{gray}{0.95}
\begin{document}

\title{Feature space approximation for kernel-based supervised learning}

\abstract{We propose a method for the approximation of high- or even infinite-dimensional feature vectors, which play an important role in supervised learning. The goal is to reduce the size of the training data, resulting in lower storage consumption and computational complexity. Furthermore, the method can be regarded as a regularization technique, which improves the generalizability of learned target functions. We demonstrate significant improvements in comparison to the computation of data-driven predictions involving the full training data set. The method is applied to classification and regression problems from different application areas such as image recognition, system identification, and oceanographic time series analysis.}

\keywords{supervised learning, kernel-based methods, feature spaces, \linebreak dimensionality reduction, system identification}

\msc{
68Q27, 
68Q32, 
68T09 
}
\doi{}
\author{Patrick Gel\ss}{Department of Mathematics \\and Computer Science, \\Freie Universität Berlin,\\ Berlin 14195, Germany}
\author{Stefan Klus}{Department of Mathematics,\\University of Surrey,\\Guildford GU2 7XH, UK}
\author{Ingmar Schuster}{Zalando Research, \\
Zalando SE Berlin,\\Berlin 10243, Germany}
\author{Christof Schütte}{Zuse Institute Berlin,\\Berlin 14195, Germany /\\ Department of Mathematics \\and Computer Science, \\Freie Universität Berlin,\\ Berlin 14195, Germany}
\email{p.gelss@fu-berlin.de}

\maketitle

\section{Introduction}

Over the last years, supervised learning algorithms have become an indispensable tool in many different scientific fields. Supervised learning -- a subbranch of machine learning -- is the approach to make predictions and decisions based on given input-output pairs, so-called training data. These data sets can represent highly diverse types of information so that supervised learning techniques have been widely applied to real-world problems in, e.g., physics and chemistry~\cite{CARLEO2019, CARTWRIGHT2020}, medicine~\cite{DEO2015, GLASER2019}, and finance~\cite{OZBAYOGLU2020}. Further popular examples are image classification~\cite{KLUS2019, BASHA2020}, spam filtering~\cite{ALGHAMDI2018, DADA2019}, and the identification of governing equations of dynamical systems~\cite{BRUNTON2016, GELSS2019}. Learning algorithms can be subdivided into regression and classification methods. The former are used to estimate the relationship between input and output vectors, while the latter are used to identify the category to which a given input vector belongs. There exist a lot of different supervised learning methods such as (non-)linear regression, support-vector machines, decision trees, and neural networks. For a detailed overview of different algorithms and applications, we refer to~\cite{TUFFERY2011}.

The field is continuously evolving and the demand for machine learning has grown significantly in the last decade. In this work, we will focus on one of the most powerful mathematical tools in this area, namely kernel-based regression and classification techniques. In this context, a kernel is a function that enables us to avoid explicit representations of high-dimensional feature maps which are used as a basis for learning nonlinear target functions. The main advantage of kernel methods is that inner products are not evaluated explicitly, but only implicitly through the kernel. Linear methods that can be entirely written in terms of kernel evaluations can be turned into nonlinear methods by replacing the standard inner product by a kernel function. This is sometimes referred to as \emph{kernelization} and generally leads to increased performance and accuracy. Examples of such methods are kernel PCA~\cite{SCHOELKOPF1998}, kernel TICA~\cite{HARMELING2003}, kernel EDMD~\cite{WILLIAMS2015, KSM19}, and kernel SINDy/MANDy~\cite{KLUS2019}. 

If the training data set contains highly similar entries or if one feature vector can be written as a linear combination of other feature vectors, the resulting transformed data matrices and Gram matrices will be ill-conditioned or even singular, which can cause numerical instabilities. This is typically alleviated by regularization or other techniques like reduced set methods~\cite{SCHOELKOPF2002} and Nyström approximations~\cite{DRINEAS2005}. In this paper, we will propose a set-reduction technique for kernel-based regression and classification methods that extracts important samples from the training data set -- and therefore relevant information encoded in the feature vectors -- in order to reduce the storage consumption as well as the computational costs. Additionally, the method can also be seen as a regularization technique. 

The remainder of the paper is structured as follows: In Section~\ref{sec: Preliminaries}, we give a brief overview of regression and classification problems as well as the use of feature maps and kernel functions for supervised learning methods. In Section~\ref{sec: Data reduction}, the basic assumptions for the feature space approximation are outlined. The data reduction is then considered from an analytical as well as a heuristic point of view, where the latter results in a method called \emph{kernel-based feature space approximation} (kFSA). Numerical results for benchmark problems from different applications areas are presented in Section~\ref{sec: Results}. Section~\ref{sec: Conclusion} concludes with a brief summary and a future outlook.

\section{Preliminaries}\label{sec: Preliminaries}

We begin with a brief overview of regression and classification problems as well as their formulation in terms of feature spaces and kernels. The notation used throughout this paper is summarized in Table~\ref{tab: notation}. At some points though, we will slightly abuse the notation as we do not distinguish between a data matrix $X$ and the set of its columns, which is also simply denoted by $X$.

\begin{table}[h]
\renewcommand{\arraystretch}{1.3}
  \caption{Notation for special matrices, functions, and spaces.}
  \centering
  \begin{tabular}{ll}
    \hline
    \textbf{Symbol} & \textbf{Description} \\
    \hline 
    $\mathcal{S}$ & sample space (subset of $\R^d$) \\
    $\mathcal{H}$ & feature space (subset of $\R^n$, $n \gg d$, or $\ell^2$) \\
    $\Psi \colon \mathcal{S} \to \mathcal{H}$ & feature map \\
    $X = \big[ x^{(1)}, \dots, x^{(m)}\big]$ & input matrix in $\R^{d \times m}$\\
    $Y = \big[ y^{(1)}, \dots, y^{(m)}\big]$ & output matrix in $\R^{d^\prime \times m}$\\
    $\Psi_X$  & transformed data matrix in $\R^{n \times m}$\\
    $k \colon \mathcal{S} \times \mathcal{S} \to \R$ & kernel function\\
    $\kappa$ & kernel parameter in $\R^+$ \\
    $G_{X, X^\prime}$ & Gram matrix in $\R^{m \times m^\prime}$\\
    $\Theta$  & coefficient matrix\\
    $f \colon \mathcal{S} \to \R^{d^\prime}$ & target (regression or decision) function\\
    $E \colon \mathcal{S} \times \R^{d \times m^\prime} \to \R^+_0$ & approximation error\\
    $\varepsilon$ & threshold for feature space approximation in $\R^+_0$\\
    $\gamma$ & regularization parameter in $\R^+$\\
    \hline
  \end{tabular}
  \label{tab: notation}
\end{table}

\subsection{Regression and classification problems}

Regression and classification problems are subsumed under \emph{supervised learning}, i.e., the data-driven approach to construct a function that correctly maps input to output variables. Given a training set of samples in the form of vectors $x^{(j)} \in \R^d$ and corresponding output vectors $y^{(j)} \in \R^{d^\prime}$, with ${j = 1, \dots, m}$, the aim of regression is to model the relationship between both sets by a function ${f \colon \R^d \to \R^{d^\prime}}$ which enables us to predict the output $y$ for any given test vector $x$. Classification problems are here seen as a special case of regression analysis where the output variable is categorical. That is, $y$ determines the label of $x$, where the output vectors are typically given in the \emph{one-hot encoding} format~\cite{POTDAR2017}: 
\begin{equation}
y_i =
\begin{cases}
1, & \text{if } x \text{ belongs to category } i, \\
0, & \text{otherwise},
\end{cases}
\end{equation}
where $d^\prime$ is the number of considered categories. The advantage of one-hot encoding is that the `distance' between any two categories is always the same, thus spurious effects when the (non-ordinal) labels of two dissimilar samples are close to each other are avoided.

In this work, we want to construct regression and decision functions, respectively, as linear combinations of preselected basis functions $\psi_i$, i.e., 
\begin{equation}\label{eq: basis representation}
  f(x) = \sum_{i \in I } \lambda_i \psi_i(x),
\end{equation}
where $I$ is an appropriate index set. In many cases, however, the number of basis functions can be extremely large (or even countably infinite, i.e., $I \cong \N$) causing high numerical costs. Thus, we will exploit kernel functions in order to reduce the storage consumption and computational complexity. For more details on kernel functions and their application to supervised learning tasks, we refer to~\cite{SCHOELKOPF2002} and~\cite{BONACCORSO2017}.

\subsection{Feature maps and kernel functions}\label{sec: tdt and kernels}

If the learning algorithm can be entirely written in terms of kernel evaluations, explicit basis function evaluations are not needed in practice. This is called the \emph{kernel trick} and allows us to reduce the computational complexity significantly since we do not have to compute inner products in the potentially infinite-dimensional space. However, for the following derivations it is crucial to clarify the relationship between (implicitly) given basis functions and the corresponding kernel. Additionally, we will later consider examples where the explicit representation of the regression function in terms of basis functions is of interest. Consider a vector $x \in \R^d$ and a mapping $\Psi \colon \R^d \to \R^n$, typically with $n \gg d$, given by
\begin{equation*}
\Psi(x) := \left[ \psi_1(x), \dots, \psi_n(x) \right]^\top \in \R^{n},
\end{equation*}
where the real-valued functionals $\psi_1, \dots, \psi_n$ define the preselected basis functions. 

\begin{Remark}\label{rem: finite-dimensional}
In what follows, we will focus on finite-dimensional feature maps to elucidate our approach. However, the concepts of kernels, Gram matrices, and pseudoinverses described below can be adapted to infinite-dimensional mappings $\Psi \colon \R^d \to \mathcal{H}$, e.g., $\mathcal{H} \subseteq \ell^2$, if $\Psi$ is a bounded operator with closed range, see~\cite{BENISRAEL2003}.
\end{Remark}

\noindent Given a data set ${X = [x^{(1)} , \dots, x^{(m)}] \in \R^{d \times m}}$, we define the \emph{transformed data matrix}
\begin{equation*}
  \Psi_X := \left[ \Psi\left(x^{(1)}\right), \dots, \Psi\left(x^{(m)}\right) \right] \in \R^{n \times m}.
\end{equation*}
The transformation $\Psi$ can be interpreted as a \emph{feature map} and, thus, defines a so-called \emph{kernel}. 

\begin{Definition}
A function $k \colon \mathcal{S} \times \mathcal{S} \to \R$ on a non-empty set $\mathcal{S}$ is called a kernel if there exist a real Hilbert space $\mathcal{H}$ and a feature map $\Psi \colon \mathcal{S} \to \mathcal{H}$ such that
\begin{equation}\label{eq: kernel}
  k(x, x^\prime) = \langle \Psi(x) , \Psi(x^\prime) \rangle_{\mathcal{H}}
\end{equation}
for all $x, x^\prime \in \mathcal{S}$.
\end{Definition}

By definition, a kernel is a symmetric and positive semidefinite function. The space $\mathcal{H}$ is called the \emph{feature space} of $k$. In what follows, we will generally consider $\mathcal{S}$ as the space of training and test samples, i.e., $\mathcal{S} \subseteq \R^d$. In this regard, we will slightly abuse the notation at some points by simply denoting the column set of a given data matrix $X$ by the same symbol. Furthermore, we consider $\mathcal{H}$ as a subspace of $\R^n$, see Remark~\ref{rem: finite-dimensional}, and use the Euclidean inner product. The \emph{Gram matrix} corresponding to the kernel $k$ and a given data matrix $X$ is defined as
\begin{equation*}
  G_{X,X} = \Psi_X^\top \Psi_X = 
  \begin{bmatrix}
    k(x^{(1)}, x^{(1)}) & \cdots & k(x^{(1)}, x^{(m)})\\
    \vdots & \ddots & \vdots\\
    k(x^{(m)}, x^{(1)}) & \cdots & k(x^{(m)}, x^{(m)})
  \end{bmatrix} \in \R^{m \times m}.
\end{equation*}
In the following sections, we will also consider Gram matrices corresponding to two different data sets $X, X^\prime$. These are then denoted by $G_{X, X^\prime} = \Psi_X^\top \Psi_{X^\prime}$. 

\begin{Remark}
  The feature space representation~\eqref{eq: kernel} is in general not unique. That is, inner products of different feature maps can result in the same kernel. On the other hand, Mercer's theorem~\cite{MERCER1909} states that we can find a (potentially infinite-dimensional) feature space representation for any given symmetric positive semidefinite kernel.
\end{Remark}

\subsection{Kernel-based regression and classification}\label{sec: kernel-based learning}

It was shown in \cite{KLUS2019} that SINDy -- originally developed for learning the governing equations of dynamical systems~\cite{BRUNTON2016} -- can be applied to classification problems using the transformed data matrices (or, more generally, tensors) and resulting kernels introduced above. This holds for supervised learning problems in general as we will illustrate in this section. Given a training set $X \in \R^{d \times m}$ and the corresponding output matrix $Y \in \R^{d^\prime \times m}$, we consider the minimization problem
\begin{equation}\label{eq: minimization problem 1}
  \min_{\Theta \in \R^{d^\prime \times n}} \lVert Y - \Theta \Psi_X \rVert_F^2,
\end{equation}
where $\Psi_X \in \R^{n \times m}$ is the transformed data matrix given by the feature map $\Psi$, see Section~\ref{sec: tdt and kernels}. A solution $\Theta$ of \eqref{eq: minimization problem 1} then represents coefficients for expressing the output variables in terms of the preselected basis functions, i.e.,
\begin{equation*}
  y^{(j)}_i \approx \sum_{\mu=1}^n \Theta_{i,\mu} \psi_{\mu}(x^{(j)}).
\end{equation*}
If the number of basis functions is significantly larger than the number of training data points, typically equality holds. This might result in overfitting, i.e., the learned functions provide exact results for the training data but might not generalize well to new data. The regression/decision function $f \colon \mathcal{S} \to \R^{d^\prime}$ is defined as
\begin{equation}\label{eq: target function}
  f(x) = \Theta \Psi(x),
\end{equation}
where $x \in \mathcal{S} \subseteq \R^d$ is a sample from a given test set. For classification problems, the $i$th entry of the vector $f(x)$ is interpreted as the likelihood of $x$ belonging to category $i$. The above approach in combination with hard thresholding is known as \emph{sparse identification of nonlinear dynamical systems} (SINDy) in the context of system identification, i.e., when the data matrix $X$ contains snapshots of a dynamical system and $Y$ the corresponding derivatives. 

Adding a regularization term of the form $\gamma \lVert \Theta \rVert_F^2$ to~\eqref{eq: minimization problem 1} yields the so-called ridge regression problem~\cite{MURPHY2012}. The solution would then be given by $\Theta = Y (\Psi_X^\top \Psi_X + \gamma \mathrm{Id})^{-1} \Psi_X^\top$. As $\gamma$ goes to zero, this converges to the solution of \eqref{eq: minimization problem 1} with minimal Frobenius norm, i.e.,
\begin{equation}\label{eq: coefficient matrix 1}
  \Theta = Y \Psi_X^+,
\end{equation}
where $\Psi_X^+$ denotes the \emph{Moore--Penrose inverse} (or simply called \emph{pseudoinverse}) of $\Psi_X$, see~\cite{GOLUB2013}. The direct computation of $\Psi_X^+$ may be computationally expensive due to a large number of basis functions. Therefore, we use the kernel trick, i.e., the pseudoinverse is written as
\begin{equation*}
  \Psi_X^+ =  (\Psi_X ^\top \Psi_X)^+ \Psi_X^\top = G_{X,X}^+ \Psi_X^\top.
\end{equation*}
Now, the solution of \eqref{eq: minimization problem 1} can be expressed as
\begin{equation}\label{eq: coefficient matrix 2}
  \Theta = Y G_{X,X}^+ \Psi_X^\top. 
\end{equation}
The coefficient matrix $\Theta^\prime := Y G_{X,X}^+$ is then the minimum-norm solution of
\begin{equation}\label{eq: minimization problem 2}
  \min_{\Theta \in \R^{d^\prime \times m}} \lVert Y - \Theta G_{X,X} \rVert_F^2.
\end{equation}
This tells us that, equivalently to \eqref{eq: minimization problem 1}, we can consider the minimization problem \eqref{eq: minimization problem 2} in order to construct the target function
\begin{equation*}
  f(x) = \Theta \Psi(x) = \Theta^\prime \Psi_X^\top \Psi(x) = \Theta^\prime G_{X, x},
\end{equation*}
where $G_{X,x}$ denotes the Gram matrix corresponding to the sets $X$ and $\{x\}$.

In comparison to the coefficient matrix $\Theta$, the matrix $\Theta^\prime$ may have much smaller dimensions (assuming that $n \gg m$). Also, we only consider Gram matrices and kernel evaluations, there is no need to construct a transformed data matrix and we therefore may be able to mitigate storage problems. The above approach was introduced as kernel-based SINDy/MANDy in \cite{KLUS2019} and even works if the transformation $\Psi$ is not given explicitly, but implicitly defined in terms of a kernel function.

\section{Data reduction}\label{sec: Data reduction}

As described in the previous section, the kernel-based construction of the regression/decision function allows us to omit explicit computations in the feature space. Under this premise, we are therefore not interested in methods for finding approximate kernel expressions or basis-function reductions as done for example in~\cite{RAHIMI2008, LE2013, LITZINGER2018, SANTIN2019}. The crucial aspect, in our case, is the dimension of the involved Gram matrix, whose size is given by the number of considered samples. In this section, we introduce an approach to filter unnecessary samples in order to speed up the computations, in particular in the testing phase. 

\subsection{Feature space approximation}\label{sec: data-reduced regression}

We assume that the transformed data matrix is (nearly) rank-deficient or, in other words, there exists a $n \times \tilde{m}$-submatrix of $\Psi_X$ such that the column spaces of both matrices are (numerically) equal. Suppose there exist a subset $\tilde{X} \subset X$ and a matrix $C \in \R^{\tilde{m} \times m}$ with $\tilde{m} := |\tilde{X}| < m$ such that 
\begin{equation}\label{eq: linear comb psi}
  \Psi_{\tilde{X}} C =  \Psi_X,
\end{equation}
where the columns of $\Psi_{\tilde{X}} \in \R^{n \times \tilde{m}}$ are a linearly independent subset of the columns of $\Psi_X$. In practice, (almost) linear dependencies among the columns of the transformed data matrix $\Psi_X$ lead to ill-conditioned or singular Gram matrices. Thus, the proposed approach can also be seen as a regularization technique. 

\begin{Lemma}
  Given matrices $A \in \R^{k \times l}$ and $B \in \R^{l \times m}$ where $A$ has full column rank and $B$ full row rank, it holds that $(AB)^+ = B^+ A^+$.
\end{Lemma}
\noindent For a proof of the above lemma, we refer to~\cite{CAMPBELL2009}. By assuming that $C$ has linearly independent rows, the pseudoinverse of $\Psi_X$ is then given by
\begin{equation*}
 \Psi_X^+ = C^+ \Psi_{\tilde{X}}^+.
\end{equation*}
Using the kernel trick, the solution \eqref{eq: coefficient matrix 1} can now be written as
\begin{equation*}
  \Theta = Y C^+ \Psi_{\tilde{X}}^+ = Y C^+ G_{\tilde{X}, \tilde{X}}^+ \Psi_{\tilde{X}}^\top.
\end{equation*}
It holds that $\Theta = \tilde{\Theta} \Psi_{\tilde{X}}^\top$, where the matrix $\tilde{\Theta} := Y C^+ G_{\tilde{X}, \tilde{X}}^+$ is the minimum-norm solution of the least-squares problem
\begin{equation}\label{eq: minimization problem 3}
  \min_{\Theta \in \R^{d^\prime \times \tilde{m}}} \lVert Y - \Theta G_{\tilde{X} , \tilde{X}} C \rVert_F = \min_{\Theta \in \R^{d^\prime \times \tilde{m}}} \lVert Y - \Theta G_{\tilde{X} , X}\rVert_F
\end{equation}
because $G_{\tilde{X}, \tilde{X}}$ has full rank (equal to the column rank of $\Psi_{\tilde{X}}$) and therefore it holds that 
\begin{equation*}
  (G_{\tilde{X}, \tilde{X}} C)^+ = C^+ G_{\tilde{X}, \tilde{X}}^+.
\end{equation*} 
Since $\tilde{m} < m \ll n$, this means we can -- under the above assumptions -- solve a minimization problem with potentially much lower dimensions than \eqref{eq: coefficient matrix 2} and especially \eqref{eq: coefficient matrix 1}, but still can fully reconstruct the target function 
\begin{equation}\label{eq: minimization problem 3 - f}
  f(x) = \tilde{\Theta} G_{\tilde{X}, x} = \Theta \Psi(x)
\end{equation}
as defined in \eqref{eq: target function} (if the feature map is explicitly given). Note that even though the coefficient matrix $\tilde{\Theta}$ lives in a lower-dimensional space than $\Theta$, the minimization problem~\eqref{eq: minimization problem 3} still includes all training data for supervised learning. Especially in the test phase this can be advantageous. If $\tilde{m}$ is significantly smaller than $m$ and $n$, the evaluation of the regression/classification function is much cheaper since it only requires the multiplication of a $d^\prime \times \tilde{m}$ by a vector of $\tilde{m}$ kernel evaluations.

In general, of course, there does not have to exist such an optimal set $\tilde{X}$ satisfying~\eqref{eq: linear comb psi}. However, for practical applications we may find subsets $\tilde{X} \subset X$ which satisfy~\eqref{eq: linear comb psi} up to a certain extent, i.e., any feature vector $\Psi(x)$, $x \in X$, can be closely approximated as a linear combination of the columns of $\Psi_{\tilde{X}}$.

\subsection{Analytical formulation \& existing approaches}\label{sec: analytical formulation}

As described above, we seek a subset $\tilde{X}$ of the columns of the data matrix $X$ such that \eqref{eq: linear comb psi} is (approximately) satisfied. That is, for each vector in $x \in X \setminus \tilde{X}$ there should exist a vector $c \in \R^{\tilde{m}}$ such that $\Psi_{\tilde{X}} c$ is equal to (or very close to) $\Psi(x)$. In other words, we try to find a smallest possible $\tilde{X} \subset X$ such that all column vectors of $\Psi_X$ lie in the span of the column vectors of $\Psi_{\tilde{X}}$. For a single vector $\Psi(x)$ with $x \in X $, we define the approximation error by
\begin{equation}\label{eq: approximation error}
  E(\tilde{X}, x) := \min_{c \in \R^{\tilde{m}}} \lVert \Psi(x) - \Psi_{\tilde{X}} c \rVert^2_2,
\end{equation}
where the minimum-norm solution can be written as $c = \Psi_{\tilde{X}}^+ \Psi(x)$, i.e., $E(\tilde{X}, x) = \Psi(x) - \Psi_{\tilde{X}} \Psi^+_{\tilde{X}} \Psi(x)$. If $\Psi(x) \in \textrm{span} \, \Psi_{\tilde{X}}$, especially if $x \in \tilde{X}$, the approximation error~\eqref{eq: approximation error} is equal to zero. Combining the error minimization with the constraint that $\tilde{m} = |\tilde{X}|$ should be small in comparison to $m = |X|$ (assuming that the number of given observations is large enough), we have to solve the following optimization problem:
\begin{equation}\label{eq: analytical formulation}
\begin{split}
  \min_{\tilde{X} \subset X} \left( \sum_{x \in X }  E(\tilde{X}, x)  + \gamma \tilde{m} \right)
  &= \min_{\tilde{X} \subset X} \left(  \sum_{x \in X }  \min_{c \in \R^{\tilde{m}}} \lVert \Psi(x) - \Psi_{\tilde{X}} c \rVert^2_2  + \gamma \tilde{m} \right)\\ 
  &= \min_{\tilde{X} \subset X}  \left( \min_{C \in \R^{\tilde{m} \times m}} \lVert \Psi_{X} - \Psi_{\tilde{X}} C \rVert^2_F  + \gamma \tilde{m} \right),
\end{split}
\end{equation}
where $\gamma$ is a regularization parameter such that high values for $\gamma$ will enforce selecting fewer points. 

Finding an optimal solution of~\eqref{eq: analytical formulation} is a complex combinatorial optimization problem and there exist different iterative and greedy approaches. A description of \emph{reduced set methods}, where samples are removed from the set $X$ using, e.g., kernel PCA or $\ell_1$ penalization, can be found in~\cite{SCHOELKOPF2002}. However, these approaches are based on a stepwise reduction of the whole set of feature vectors. That is, in each iteration step the sample $x \in \tilde{X}$ with minimum error $E(\tilde{X} \setminus x, x)$ is removed from $\tilde{X}$ (starting with $\tilde{X} = X$). In particular, if the set of feature vectors is too large to handle at once, this method cannot be applied. Depending on the number of samples and the dimension of the feature space, reduced set methods can be computationally expensive in general. Another efficient way to find important samples in the given data set is the \emph{Nyström method}~\cite{WANG2013}. It can be generally applied to symmetric positive semidefinite matrices $A$ in order to compute a decomposition of the form $A = B C^+ B^\top$, where $B$ is a matrix formed by a subset of the columns of $A$ and $C$ is an intersection matrix. In particular, the Nyström method can be used to approximate Gram matrices~\cite{DRINEAS2005, WILLIAMS2000}. That is, for a fixed number of samples $\tilde{m}$, the Nyström algorithm seeks to find a subset $\tilde{X} \subset X$ with $|\tilde{X}| = \tilde{m}$ such that the Gram matrix $G_{X,X} = \Psi_X^\top \Psi_X$ can be approximately written as 
\begin{equation}\label{eq: Nystrom}
  G_{X,X} \approx G_{X,\tilde{X}} C,
\end{equation}
with $C = G_{\tilde{X}, \tilde{X}} ^+ G_{\tilde{X}, X}$. On the one hand, however, we here have a restriction on $C$ so that there generally may exist better choices for $\tilde{X}$ and $C$ to minimize the error $\lVert G_{X,X} - G_{X, \tilde{X}} C \rVert_F$. On the other hand, the Nyström approximation does not provide a feature space approximation as described in Section~\ref{sec: data-reduced regression} since~\eqref{eq: Nystrom} does not imply $\Psi_X \approx \Psi_{\tilde{X}} C$. Note that only the other direction is true.

In what follows, we will therefore propose a heuristic approach to iteratively construct a subset $\tilde{X}$ such that any feature vector corresponding to a sample $x \in X$ can be approximated by a linear combination of the set $\{\Psi(x) \colon x \in \tilde{X}\}$ up to an arbitrarily small error $\varepsilon$. In Section~\ref{sec: CalCOFI}, results obtained by applying the (recursive) Nyström method as introduced in~\cite{MUSCO2017} and our approach will be compared.

\begin{Remark}
 The Nyström method as well as, e.g., the \emph{iterative spectral method} (ISM)~\cite{WU2019} can also be used for the reduction of the feature space dimension. However, our aim is to reduce the number of samples/feature vectors $m$, not the dimension $n$.
\end{Remark}

\subsection{Heuristic approach}

In order to find a subset as described in Section~\ref{sec: analytical formulation}, we introduce a heuristic approach based on a \emph{bottom-up} principle -- in contrast to the reduced set methods described in~\cite{SCHOELKOPF2002}, which are based on a \emph{top-down} selection of the samples. Suppose we are given a subset $\tilde{X}$ of the columns of $X$ and the corresponding transformed data matrix $\Psi_{\tilde{X}}$. For the following approach -- which we call \emph{kernel-based feature space approximation} or in short \emph{kFSA} -- we expand the subset $\tilde{X}$ step by step by adding the vector $x$ that has the largest approximation error~\eqref{eq: approximation error}. The idea is to increase the dimension of the subspace spanned by the columns of $\Psi_{\tilde{X}}$ iteratively without including linearly dependent feature vectors (or even duplicates).

As the first data point, we choose the sample whose feature vector approximates all other columns of $\Psi_X$ in an optimal way, i.e., we consider the extreme case of the minimization problem~\eqref{eq: analytical formulation} where the regularization parameter $\gamma$ is sufficiently large so that only a single sample is selected. We then obtain
\begin{equation*}
x_0 = \argmin_{x \in X} \min_{c \in \R^{1 \times m}} \lVert \Psi_X - \Psi(x)c \rVert_F^2.  
\end{equation*}
It holds that
\begin{equation*}
  c = \Psi(x)^+ \Psi_X = \frac{1}{\lVert \Psi(x) \rVert_2 ^2} \Psi(x)^\top \Psi_X = \frac{1}{k(x,x)} G_{x, X}
\end{equation*}
and, therefore, $x_0$ is given by
\begin{equation}\label{eq: x_0}
\begin{split}
  x_0 &= \argmin_{x \in X} \lVert \Psi_X - \frac{1}{k(x,x)} \Psi(x) G_{x,X} \rVert_F^2\\
  &= \argmin_{x \in X} \textrm{tr} \left( \left(  \Psi_X - \frac{1}{k(x,x)} \Psi(x) G_{x,X}\right)^\top \left(  \Psi_X - \frac{1}{k(x,x)} \Psi(x) G_{x,X}\right) \right)\\
  &= \argmin_{x \in X} \textrm{tr} \left(  G_{X,X} - \frac{2}{k(x,x)} G_{X,x}G_{x,X} + \frac{1}{k(x,x)^2} G_{X,x}  k(x,x) G_{x,X}\right)\\
  &= \argmin_{x \in X} \textrm{tr} \left(  G_{X,X} - \frac{1}{k(x,x)} G_{X,x}G_{x,X} \right)\\
  &= \argmin_{x \in X}  \sum_{x^\prime \in X} k(x^\prime, x^\prime) - \frac{k(x,x^\prime)^2}{k(x,x)}\\
  &= \argmax_{x \in X} \sum_{x^\prime \in X} \frac{k(x,x^\prime)^2}{k(x,x)}.
\end{split}
\end{equation}
As long as $\Psi_{\tilde{X}} \in \R^{n \times \tilde{m}}$, $n \geq \tilde{m}$, has full column rank, it holds that $G_{\tilde{X}, \tilde{X}}^+ = G_{\tilde{X}, \tilde{X}}^{-1}$ since $G_{\tilde{X}, \tilde{X}}$ is then a non-singular matrix. Thus, we can rewrite the approximation error~\eqref{eq: approximation error} in terms of kernel functions and corresponding Gram matrices.

\begin{Lemma}\label{lemma: kernel-based approximation error}
  Using the fact that $\Psi_{\tilde{X}}^+ = G_{\tilde{X}, \tilde{X}}^{-1} \Psi_{\tilde{X}}^\top$, the approximation error~\eqref{eq: approximation error} for any $x \in X$ can be written as 
  \begin{equation}\label{eq: kernel-based approximation error}
    E(\tilde{X}, x) = k(x,x) -  G_{x, \tilde{X}} G_{\tilde{X}, \tilde{X}}^{-1} G_{\tilde{X}, x}.
  \end{equation}

\end{Lemma}
\begin{proof} Using $c = \Psi_{\tilde{X}}^+ \Psi(x)$, we obtain
 \begin{equation*}
  \begin{split}
    E(\tilde{X}, x) &= \lVert \Psi(x) - \Psi_{\tilde{X}} \Psi_{\tilde{X}}^+ \Psi(x) \rVert ^2_2 \\
    & = \lVert \Psi(x) - \Psi_{\tilde{X}} G_{\tilde{X}, \tilde{X}}^{-1} \Psi_{\tilde{X}}^\top \Psi(x) \rVert ^2_2\\
    &= \left(\Psi(x) - \Psi_{\tilde{X}} G_{\tilde{X}, \tilde{X}}^{-1} \Psi_{\tilde{X}}^\top \Psi(x) \right)^\top \left(\Psi(x) - \Psi_{\tilde{X}} G_{\tilde{X}, \tilde{X}}^{-1} \Psi_{\tilde{X}}^\top \Psi(x) \right) \\
    &= \Psi(x)^\top \Psi(x) - 2 \Psi(x)^\top \Psi_{\tilde{X}} G_{\tilde{X}, \tilde{X}}^{-1} \Psi_{\tilde{X}}^\top \Psi(x) + \Psi(x)^\top \Psi_{\tilde{X}} G_{\tilde{X}, \tilde{X}}^{-1} \underbrace{\Psi_{\tilde{X}}^\top \Psi_{\tilde{X}}}_{=G_{\tilde{X}, \tilde{X}}} G_{\tilde{X}, \tilde{X}}^{-1} \Psi_{\tilde{X}}^\top \Psi(x) \\
    &= k(x,x) -  G_{x, \tilde{X}} G_{\tilde{X}, \tilde{X}}^{-1} G_{\tilde{X}, x}. \qedhere
  \end{split}
\end{equation*}
\end{proof}
\noindent Let us denote the vector containing all approximation errors for the samples in $X \setminus \tilde{X}$ by $\Delta$. Using the Hadamard product, this vector can be expressed in a compact way as the following lemma shows.

\begin{Lemma}\label{lemma: error vector - sle}
 The vector $\Delta \in \R^{m - \tilde{m}}$ can be written as
\begin{equation}\label{eq: error vector}
  \Delta = \emph{\textrm{diag}}\left(G_{X\setminus \tilde{X}, X\setminus \tilde{X}}\right)^\top - \mathds{1}^\top \cdot \left(G_{\tilde{X}, X \setminus \tilde{X} } \odot Z\right),
\end{equation}
where $\mathds{1} \in \R^{\tilde{m}}$ is a vector of ones and $Z$ is the unique solution of 
\begin{equation}\label{eq: error vector - Z}
  G_{\tilde{X}, \tilde{X}} Z = G_{\tilde{X}, X \setminus \tilde{X}}.
\end{equation}
\end{Lemma}
\begin{proof}
  For any $x \in X$, the approximation error~\eqref{eq: kernel-based approximation error} can be expressed as
  \begin{equation*}
  \begin{split}
    E(\tilde{X}, x) &= k(x,x) - \sum_{i=1}^{\tilde{m}} \left( G_{\tilde{X}, x} \right)_i \left( G_{\tilde{X}, \tilde{X}}^{-1} G_{\tilde{X},x} \right)_i \\
    &= k(x,x) - \sum_{i=1}^{\tilde{m}} \left( G_{\tilde{X}, x} \odot  \left( G_{\tilde{X}, \tilde{X}}^{-1} G_{\tilde{X},x} \right) \right) _i\\
    &= k(x,x) - \mathds{1}^\top \cdot  \left( G_{\tilde{X}, x} \odot  \left( G_{\tilde{X}, \tilde{X}}^{-1} G_{\tilde{X},x} \right) \right).
  \end{split}
  \end{equation*}
  For the error vector $\Delta$, it follows that
  \begin{equation*}
    \Delta = \emph{\textrm{diag}}\left(G_{X\setminus \tilde{X}, X\setminus \tilde{X}}\right)^\top - \mathds{1}^\top \cdot \Big(G_{\tilde{X}, X \setminus \tilde{X} } \odot \underbrace{G_{\tilde{X}, \tilde{X}}^{-1} G_{\tilde{X}, X \setminus \tilde{X}}}_{=: Z}\Big). \qedhere
  \end{equation*}
\end{proof}
\noindent Note that we have to construct the complete Gram matrix $G_{X,X}$ corresponding to all samples only once. For the computation of the approximation errors in each step, we could then simply extract the necessary submatrices. However, using Lemma~\ref{lemma: error vector - sle} for the error computation would result in a high complexity since the dominant computational costs arise from solving the systems of linear equations given in~\eqref{eq: error vector - Z}. That is, supposing we have already found $\tilde{m}$ samples, we then have to solve a system with an $\tilde{m} \times \tilde{m}$-matrix and $m-\tilde{m}$ right-hand sides. Thus, the overall complexity can be estimated by $O \left( \sum_{\tilde{m}=1}^{M} \tilde{m}^3 (m-\tilde{m}) \right)$ if we repeat the procedure $M$ times until the desired accuracy is reached.

In order to reduce the computational effort, we can use an alternative way of calculating the approximation errors. Instead of computing the matrix $Z$ that is needed for the construction of $\Delta$ in~\eqref{eq: error vector} as the solution of the system of linear equations given in~\eqref{eq: error vector - Z}, it is possible to iteratively calculate $G^{-1}_{\tilde{X}, \tilde{X}}$ in order to compute $Z$ directly by matrix multiplication. 

\begin{Lemma}\label{lemma: blockwise inversion}
  Given matrices $A,B,C,D$ of appropriate size such that $A$ and $D-CA^{-1}B$ are non-singular square matrices, the blockwise inversion formula (also called \emph{Banachiewicz inversion formula}) states
\begin{equation*}
{\begin{bmatrix}A&B\\C&D\end{bmatrix}}^{-1}={\begin{bmatrix}A^{-1}+A^{-1}BS^{-1}CA^{-1}&-A^{-1}BS^{-1}\\-S^{-1}CA^{-1}&S^{-1}\end{bmatrix}},
\end{equation*}
where the matrix $S$ -- called the \emph{Schur complement} -- is given by $S = D-CA^{-1} B$.
\end{Lemma}
\noindent For a proof of the above statement, we refer to~\cite{TIAN2009}. Suppose we want to expand the (linearly independent) set $\tilde{X}$ by a sample $x_\text{new}$ and the inverse of $G_{\tilde{X}, \tilde{X}}$ is known -- which is of course the case for the initial sample. The Gram matrix corresponding to the set $\tilde{X} \cup \{x_\text{new}\}$ is then given by
\begin{equation}\label{eq: block matrix}
  G_{\tilde{X} \cup \{x_\text{new}\}, \tilde{X} \cup \{x_\text{new}\}} = \begin{bmatrix}\Psi_{\tilde{X}}^\top \\ \Psi(x_\text{new})^\top \end{bmatrix} \begin{bmatrix}\Psi_{\tilde{X}} & \Psi(x_\text{new})\end{bmatrix} = 
  \begin{bmatrix} G_{\tilde{X}, \tilde{X}} & G_{\tilde{X}, x_\text{new}} \\ G_{x_\text{new}, \tilde{X}} & G_{x_\text{new},x_\text{new}} \end{bmatrix}, 
\end{equation}
where $G_{x_\text{new},x_\text{new}}$ is simply the kernel evaluation at $x_\text{new}$, i.e., $G_{x_\text{new},x_\text{new}} = k(x_\text{new},x_\text{new})$. If the feature vector $\Psi(x_\text{new})$ is not an element of the span of the columns of $\Psi_{\tilde{X}}$, then the matrix~\eqref{eq: block matrix} is non-singular and, furthermore, symmetric positive definite. Applying Lemma~\ref{lemma: blockwise inversion}, we can write
\begin{equation}\label{eq: block inverse}
  G_{\tilde{X} \cup \{x_\text{new}\}, \tilde{X} \cup \{x_\text{new}\}}^{-1} = 
  \begin{bmatrix}
    G_{\tilde{X}, \tilde{X}}^{-1} + \frac{T T^\top}{S} & -\frac{T}{S} \\
    -\frac{T^\top}{S} & \frac{1}{S}
  \end{bmatrix}
\end{equation}
with $T = G_{\tilde{X}, \tilde{X}}^{-1} G_{\tilde{X}, x_\text{new}}$ and $S = k(x_\text{new}, x_\text{new}) - G_{x_\text{new}, \tilde{X}} G_{\tilde{X}, \tilde{X}}^{-1} G_{\tilde{X}, x_\text{new}} = E(\tilde{X},x_\text{new}) > 0 $. Using~\eqref{eq: block inverse} would already avoid having to solve a system of linear equations. In this case, the dominant computational effort, namely the multiplication of $G_{\tilde{X}, \tilde{X}}^{-1}$ and $G_{\tilde{X}, X \setminus \tilde{X}}$ in every iteration step, can be estimated by $O\left( \sum_{\tilde{m}=1}^{M} \tilde{m}^2 (m - \tilde{m})\right)$. However, we can exploit the block inverse~\eqref{eq: block inverse} to directly construct an iterative scheme for $Z$ and therefore reduce the complexity of our method even further. Each column of the matrix $Z= G^{-1}_{\tilde{X}, \tilde{X}} G_{\tilde{X}, X \setminus \tilde{X}} \in \R^{\tilde{m} \times (m - \tilde{m})}$ is from now on considered as the evaluation of a function $H$ of a set $\tilde{X}$ and a sample $x \in X \setminus \tilde{X}$, i.e., $H(\tilde{X}, x) = G^{-1}_{\tilde{X}, \tilde{X}} G_{\tilde{X},x}$. 
\begin{theorem}
 For any $x_\text{new}$ with $\Psi(x_\text{new}) \notin \emph{\textrm{span}}\,\Psi_{\tilde{X}}$ and $x \in X$, it holds that
 \begin{equation*}
   H(\tilde{X} \cup \{x_\text{new}\}, x) = \begin{bmatrix} H(\tilde{X}, x) -  H(\tilde{X}, x_\text{new}) \lambda\\ \lambda \end{bmatrix} 
 \end{equation*}
 with $\lambda = \frac{k( x_\text{new}, x) - G_{x_\text{new}, \tilde{X}} H(\tilde{X}, x)}{E(\tilde{X}, x_\text{new})}$.
\end{theorem}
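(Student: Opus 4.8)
The plan is to substitute the block-inverse formula~\eqref{eq: block inverse} directly into the definition $H(\tilde{X} \cup \{x_\text{new}\}, x) = G^{-1}_{\tilde{X} \cup \{x_\text{new}\}, \tilde{X} \cup \{x_\text{new}\}} \, G_{\tilde{X} \cup \{x_\text{new}\}, x}$ and read off the two blocks. First I would observe that, with the new sample appended as the last column so that the partition matches~\eqref{eq: block matrix}, the right-hand Gram matrix factors as
\begin{equation*}
  G_{\tilde{X} \cup \{x_\text{new}\}, x} = \begin{bmatrix} G_{\tilde{X}, x} \\ k(x_\text{new}, x) \end{bmatrix}.
\end{equation*}
The hypothesis $\Psi(x_\text{new}) \notin \textrm{span}\,\Psi_{\tilde{X}}$ guarantees, as already noted before the theorem, that the block matrix~\eqref{eq: block matrix} is non-singular with Schur complement $S = E(\tilde{X}, x_\text{new}) > 0$, so~\eqref{eq: block inverse} is applicable; moreover $T = G^{-1}_{\tilde{X}, \tilde{X}} G_{\tilde{X}, x_\text{new}} = H(\tilde{X}, x_\text{new})$ by definition of $H$.

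Next I would multiply out. For the lower block of the product I get
\begin{equation*}
  -\tfrac{T^\top}{S}\, G_{\tilde{X}, x} + \tfrac{1}{S}\, k(x_\text{new}, x) = \frac{k(x_\text{new}, x) - T^\top G_{\tilde{X}, x}}{S}.
\end{equation*}
The one slightly fiddly step is to bring $T^\top G_{\tilde{X}, x}$ into the shape appearing in the statement: using symmetry of $G^{-1}_{\tilde{X}, \tilde{X}}$ and $G_{\tilde{X}, x_\text{new}}^\top = G_{x_\text{new}, \tilde{X}}$, one has $T^\top G_{\tilde{X}, x} = G_{x_\text{new}, \tilde{X}} G^{-1}_{\tilde{X}, \tilde{X}} G_{\tilde{X}, x} = G_{x_\text{new}, \tilde{X}} H(\tilde{X}, x)$, so with $S = E(\tilde{X}, x_\text{new})$ the lower block equals exactly $\lambda$.

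For the upper block I would expand likewise,
\begin{equation*}
  \bigl(G^{-1}_{\tilde{X}, \tilde{X}} + \tfrac{T T^\top}{S}\bigr) G_{\tilde{X}, x} - \tfrac{T}{S}\, k(x_\text{new}, x) = H(\tilde{X}, x) - T \cdot \frac{k(x_\text{new}, x) - T^\top G_{\tilde{X}, x}}{S} = H(\tilde{X}, x) - H(\tilde{X}, x_\text{new})\,\lambda,
\end{equation*}
again using $T = H(\tilde{X}, x_\text{new})$ and the identity for $T^\top G_{\tilde{X}, x}$. Stacking the two blocks gives the claimed formula. I do not expect a real obstacle: the argument is a direct computation from Lemma~\ref{lemma: blockwise inversion}, and the only points needing care are the bookkeeping of which sample occupies the trailing block (so the partitions of $G_{\tilde{X} \cup \{x_\text{new}\}, \tilde{X} \cup \{x_\text{new}\}}$ and of $G_{\tilde{X} \cup \{x_\text{new}\}, x}$ are compatible) and the transpose/symmetry manipulation identifying $T^\top G_{\tilde{X}, x}$ with $G_{x_\text{new}, \tilde{X}} H(\tilde{X}, x)$; once these are in place the common factor $\lambda$ appears in both blocks and the result drops out.
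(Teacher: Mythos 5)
Your proposal is correct and follows essentially the same route as the paper: substitute the block inverse~\eqref{eq: block inverse} into the definition of $H(\tilde{X} \cup \{x_\text{new}\}, x)$ with $T = H(\tilde{X}, x_\text{new})$ and $S = E(\tilde{X}, x_\text{new})$, multiply out the two blocks, and identify the common factor $\lambda$. The symmetry step you single out, $T^\top G_{\tilde{X},x} = G_{x_\text{new},\tilde{X}} H(\tilde{X},x)$, is exactly the rewriting the paper performs implicitly, so there is no substantive difference.
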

\begin{proof}
 By definition, we have
 \begin{equation*}
   H(\tilde{X} \cup \{x_\text{new}\}, x) = G_{\tilde{X} \cup \{x_\text{new}\}, \tilde{X} \cup \{x_\text{new}\}}^{-1} G_{\tilde{X} \cup \{x_\text{new}\}, x}.
 \end{equation*}
 Using~\eqref{eq: block inverse}, it follows that
 \begin{equation*}
  H(\tilde{X} \cup \{x_\text{new}\}, x) = \begin{bmatrix}
    G_{\tilde{X}, \tilde{X}}^{-1} + \frac{T T^\top}{S} & -\frac{T}{S} \\
    -\frac{T^\top}{S} & \frac{1}{S}
  \end{bmatrix} \begin{bmatrix} G_{\tilde{X} , x} \\ k( x_\text{new}, x) \end{bmatrix},
 \end{equation*}
 with $T = G_{\tilde{X}, \tilde{X}}^{-1} G_{\tilde{X}, x_\text{new}} = H(\tilde{X}, x_\text{new})$ and $S = E(\tilde{X}, x_\text{new})$, see Lemma~\ref{lemma: kernel-based approximation error}. Therefore, we obtain
 \begin{equation*}
 \begin{split}
   H(\tilde{X} \cup \{x_\text{new}\}, x) &= \begin{bmatrix} G_{\tilde{X}, \tilde{X}}^{-1} G_{\tilde{X} , x} +  H(\tilde{X}, x_\text{new}) \left(   \frac{G_{x_\text{new}, \tilde{X}} G_{\tilde{X}, \tilde{X}}^{-1} G_{\tilde{X} , x} - k(x_\text{new}, x)}{E(\tilde{X}, x_\text{new})} \right) \\ \frac{k(x_\text{new}, x) - G_{x_\text{new}, \tilde{X}} G_{\tilde{X}, \tilde{X}}^{-1} G_{\tilde{X}, x}}{E(\tilde{X}, x_\text{new})} \end{bmatrix}\\
   &= \begin{bmatrix} H(\tilde{X} , x) - H(\tilde{X} , x_\text{new}) \left(   \frac{ k( x_\text{new}, x) -G_{x_\text{new}, \tilde{X}} H(\tilde{X} , x) }{E(\tilde{X}, x_\text{new})} \right) \\ \frac{ k(x_\text{new}, x) -G_{x_\text{new}, \tilde{X}} H(\tilde{X} , x) }{E(\tilde{X}, x_\text{new})} \end{bmatrix},
 \end{split}
 \end{equation*}
which yields the assertion. Note that $H(\tilde{X}, x) \in \R^{\tilde{m}}$ is \emph{not} a subvector of $H(\tilde{X} \cup x_\text{new}, x) \in \R^{\tilde{m}+1} $.
\end{proof}

\newcommand{\restrictedto}[2]{#1_{\mkern 1mu \vrule depth -1.5ex height 3ex \mkern 2mu \smash{\raisebox{1.5ex}{\scriptsize$#2$}}}}

\noindent Considering $Z = [H(\tilde{X}, x)]_{x \in X \setminus \tilde{X}} \in \R^{\tilde{m} \times (m - \tilde{m})} $ as given in \eqref{eq: error vector - Z}, we define
\begin{equation}\label{eq: Lambda}
 \Lambda = [\lambda]_{x \in X \setminus \tilde{X}} = \frac{1}{E(\tilde{X}, x_\text{new})}  \left( G_{x_\text{new}, X \setminus \tilde{X}} - G_{x_\text{new}, \tilde{X}} Z \right) \in \R^{m - \tilde{m}}.
\end{equation}
Then, the matrix 
\begin{equation*}
 Z_\text{new} = G_{\tilde{X} \cup \{x_\text{new}\}, \tilde{X} \cup \{x_\text{new}\}}^{-1} G_{\tilde{X} \cup \{x_\text{new}\}, X \setminus (\tilde{X} \cup \{x_\text{new}\})} \in \R^{\tilde{m}+1 \times (m - \tilde{m} -1 )}
\end{equation*}
required for the error computation~\eqref{eq: error vector} when considering the updated set $\tilde{X} \cup \{x_\text{new}\}$ instead of $\tilde{X}$ can be written as
\begin{equation}\label{eq: Z_new}
 Z_\text{new} = \begin{bmatrix} Z -  Z_{| \raisebox{-1.5pt}{$ \scriptstyle x_\text{new} $}} \Lambda^\top \\[0.2cm] \Lambda^\top \end{bmatrix}_{\big| \raisebox{-3pt}{$ \scriptstyle X \setminus (\tilde{X} \cup \{x_\text{new}\})$}},
\end{equation}
where $Z_{| \raisebox{-1.5pt}{$ \scriptstyle x_\text{new} $}} = H(\tilde{X}, x_\text{new})$ denotes the column of $Z$ corresponding to the sample $x_\text{new} \in X \setminus \tilde{X}$. Note that the matrix computed in~\eqref{eq: Z_new} is also restricted to the columns corresponding to the set ${X \setminus (\tilde{X} \cup \{x_\text{new}\})}$. We combine the iterative construction of the matrix $Z$ with the error-vector computation given in~\eqref{eq: error vector}. The above derivations are summarized in Algorithm~\ref{alg: sample reduction}.

\begin{Lemma}\label{lemma: kFSA complexity}
  Neglecting the computational costs for the construction of $G_{X,X}$, the complexity of Algorithm~\ref{alg: sample reduction} can be estimated by $O( m^2 + m M^2 )$, where $m$ is the number of given training samples and $M$ the number of extracted samples.
\end{Lemma}
\begin{proof}
  Determining the initial sample in line~\ref*{alg: sample reduction - initial} needs $O(m^2)$ operations. For every execution of line~\ref*{alg: sample reduction - errors}, the dominant computational costs arise from calculating the Hadamard product in~\eqref{eq: error vector}, i.e., ${O(\tilde{m} (m-\tilde{m}))}$. Since the error $E(\tilde{X}, x_\text{new})$ is already known in form of $\delta_\text{new}$, the complexity of line~\ref*{alg: sample reduction - update Z} is divided into the computations of $\Lambda$ and $Z_\text{new}$ as given in~\eqref{eq: Lambda} and \eqref{eq: Z_new}, respectively. Both can be estimated by $O(\tilde{m} (m-\tilde{m}))$ as well. Repeating lines~\ref*{alg: sample reduction - errors} to \ref*{alg: sample reduction - update Z} of the above algorithm $M$ times, i.e., until the desired accuracy is reached, and adding all computational costs yields \begin{equation*}
  \begin{split}
   O\left(m^2 + \sum_{\tilde{m}=1}^M \tilde{m} (m -\tilde{m})\right) &= O \left( m^2 + m \sum_{\tilde{m}=1}^M  \tilde{m} - \sum_{\tilde{m}=1}^M \tilde{m}^2 \right) \\
   &= O \left( m^2 + \frac{m M (M+1)}{2} - \frac{M (M+1) (2M+1)}{6} \right) \\
   &= O \left( m^2 + M (M+1) \left( \frac{m}{2} - \frac{2M+1}{6} \right) \right),
   \end{split}
  \end{equation*}
  which can be roughly summarized as $O(m^2+ m M^2)$. Note that this estimation does not include a potential speed up due to removing samples for which the approximation error is already low enough, see line~\ref*{alg: sample reduction - remove}.
\end{proof}

\begin{mdframed}[backgroundcolor=boxback,hidealllines=true]
\vspace*{-2ex}
\begin{algorithm}[H]
  \caption{Kernel-based feature space approximation (kFSA).}
  \label{alg: sample reduction}
  \begin{spacing}{1.2}
  \begin{tabular}{ll}
    \textbf{Input:} & data matrix $X$, kernel function $k$, and threshold $\varepsilon>0$.\\
    \textbf{Output:} & subset $\tilde{X} \subset X$ with $E(\tilde{X}, x)<\varepsilon$ for all $x \in X$.
  \end{tabular}
  \vspace{0.2cm}\hrule\vspace{0.2cm}
  \begin{algorithmic}[1]
      \State Set $\tilde{X} := \{x_0\}$, see \eqref{eq: x_0}, and $X_\text{left} := X \setminus \{x_0\}$.
      \label{alg: sample reduction - initial}
      \State Define $Z := G_{\tilde{X}, \tilde{X}}^{-1} G_{\tilde{X}, X_\text{left}} = (1/k(x_0, x_0)) G_{x_0, X_\text{left}}$.
      \While{$X_\text{left}$ is not empty}
      \State Use $Z$ to compute the vector $\Delta \in \R^{|X_\text{left}|}$ of approximation errors, see \eqref{eq: error vector}.\label{alg: sample reduction - errors}
      \State Define $x_\text{new} \in X$ to be the sample corresponding to the maximum error $\delta_\text{new}$ in $\Delta$.
      \State Remove all $x$ from $X_\text{left}$ with corresponding approximation errors smaller than $\varepsilon$.\label{alg: sample reduction - remove}
      \If{$\delta_\text{new} \geq \varepsilon$}
      \State Add sample vector $x_\text{new}$ to $\tilde{X}$ and remove it from $X_\text{left}$.
      \label{alg: sample reduction - add sample}
      \State Update $Z$ as described above in~\eqref{eq: Z_new}.\label{alg: sample reduction - update Z}
      \EndIf
      \EndWhile
  \end{algorithmic}
  \end{spacing}
\end{algorithm}
\vspace*{-2ex}%
\end{mdframed}
\vspace{0.5cm}

When Algorithm~\ref{alg: sample reduction} terminates, all given samples $x \in X$ which are not an element of $\tilde{X}$ can be approximated with an error $E(\tilde{X}, x)$ smaller than $\varepsilon$. In case the Gram matrix corresponding to the whole training set consumes too much memory, the algorithm may be applied iteratively to subsets of the training set in order to filter unnecessary samples. Moreover, suppose we have found a subset $\tilde{X} \subset X$ by applying Algorithm~\ref{alg: sample reduction}. If we subsequently want to extend the training data set, one of the main advantages of our method in comparison to, e.g., reduced set methods is that it is easy to decide whether or not a new sample is added to $\tilde{X}$. We do not have to apply the algorithm again to the extended set but can simply continue our bottom-up approach.

\section{Results}\label{sec: Results}

In this section, we will present three examples for the application of data-reduced regression and classification, namely the MNIST data set, the Fermi--Pasta--Ulam--Tsingou problem, and hydrographic data from the California Cooperative Oceanic Fisheries Investigations. The numerical experiments have been performed on a Linux machine with 128 GB RAM and a 3 GHz Intel Xeon processor with 8 cores. The algorithms have been implemented in Matlab R2018b and are available on GitHub: \url{https://github.com/PGelss/kFSA}.

\subsection{MNIST data set}

Let us consider the classification of the MNIST data set~\cite{LECUN1998} as a first example. The set contains grayscale images of handwritten digits and their corresponding labels from $0$ to $9$ (represented using one-hot encoding). Originally, the images have a resolution of $28 \times 28$ pixels and are divided into $60000$ training and $10000$ test samples.\!\footnote{\url{http://yann.lecun.com/exdb/mnist/}} We downsample the images to $14 \times 14$ pixels by averaging groups of four pixels, cf.~\cite{STOUDENMIRE2016}. Additionally, we normalize the samples such that the largest pixel value in each image is equal to $1$. This has shown to yield better classification rates in our experiments. In~\cite{KLUS2019}, we already used the kernel-based methods described in Section~\ref{sec: kernel-based learning} to classify the downsampled MNIST images. Due to the large amount of training data, this example set is an ideal candidate for applying our proposed set-reduction method.

Given a subset of mutually different pixels $i_1, \dots, i_r \in \{1, \dots , 14^2\}$, we again use feature maps $\PSI \colon \R^r \to \R^{2 \times 2 \times \dots \times 2}$ of the form
\begin{equation}\label{eq: MNIST feature map}
 \PSI(x) = \Psi(x_{i_1}) \otimes \Psi(x_{i_2}) \otimes \dots \otimes \Psi(x_{i_r})= \begin{bmatrix}\cos(\kappa x_{i_1}) \\ \sin(\kappa x_{i_1})\end{bmatrix} \otimes \begin{bmatrix}\cos(\kappa x_{i_2}) \\ \sin(\kappa x_{i_2})\end{bmatrix} \otimes \dots \otimes \begin{bmatrix}\cos(\kappa x_{i_r}) \\ \sin(\kappa x_{i_r})\end{bmatrix},
\end{equation}
where $x_{i_1}, \dots, x_{i_r}$ are the corresponding pixel values and $\kappa>0$ is the kernel parameter. In contrast to, e.g., \cite{KLUS2019, STOUDENMIRE2016}, we do not consider all pixels of a given image at once. Instead, we partition the images into $9$ blocks, see Figure~\ref{fig: MNIST partition}, and then apply feature maps as given in~\eqref{eq: MNIST feature map} separately. The feature map~\eqref{eq: MNIST feature map}  is defined in terms of tensor products. That is, the entries of $\PSI(x)$ are given by products of $r$ cosine or sine functions evaluated at the given pixels, i.e.,
\begin{equation*}
  \PSI(x)_{b_1, \dots, b_r} = \left( \Psi(x_{i_1}) \right)_{b_1} \cdot \left( \Psi(x_{i_2}) \right)_{b_2} \cdot \ldots \cdot \left( \Psi(x_{i_r}) \right)_{b_r},
\end{equation*}
where 
\begin{equation*}
 \left( \Psi(x_{i_j}) \right)_{b_j} = \begin{cases} \cos(\kappa x_{i_j}), & \text{if }b_j = 1, \\ \sin(\kappa x_{i_j}), & \text{if }b_j =2.\end{cases}
\end{equation*}
The corresponding kernel is then given by
\begin{equation*}
  \PSI(x)^\top \PSI(x^\prime) = \bigotimes_{j=1}^r \left( \begin{bmatrix}\cos(\kappa x_{i_j}) & \sin(\kappa x_{i_j})\end{bmatrix} \cdot \begin{bmatrix}\cos(\kappa x^\prime_{i_j}) \\ \sin(\kappa x^\prime_{i_j})\end{bmatrix} \right) = \prod_{j=1}^r \cos(\kappa (x_{i_j} - x^\prime_{i_j}) ).
\end{equation*}

\begin{figure}
  \centering
  \scalebox{0.8}{
    \begin{tikzpicture}
    \def\h{4.2}
    \node[inner sep = 0, outer sep = 0, anchor=south west, opacity=0.5] at (0,0) {\includegraphics[width=4.2cm, height=4.2cm]{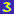}};
    \foreach \x in {1,2,3,4,5,6,7,8,9,10,11,12,13} {\draw[opacity=0.25] (0.3*\x, 0) --++ (0,\h);}  
    \foreach \x in {1,2,3,4,5,6,7,8,9,10,11,12,13} {\draw[opacity=0.25] (0, 0.3*\x) --++ (\h,0);}  
    \draw[line width=0.03cm] (0,0) --++ (0,\h) --++ (\h,0) --++ (0,-\h) --++ (-\h,0);
    \def\h{1.2}
    \draw[line width=0.03cm] (0.3,0.3) --++ (0,\h) --++ (\h,0) --++ (0,-\h) --++ (-\h,0);
    \draw[line width=0.03cm] (0.3,1.5) --++ (0,\h) --++ (\h,0) --++ (0,-\h) --++ (-\h,0);
    \draw[line width=0.03cm] (0.3,2.7) --++ (0,\h) --++ (\h,0) --++ (0,-\h) --++ (-\h,0);
    \draw[line width=0.03cm] (1.5,0.3) --++ (0,\h) --++ (\h,0) --++ (0,-\h) --++ (-\h,0);
    \draw[line width=0.03cm] (1.5,1.5) --++ (0,\h) --++ (\h,0) --++ (0,-\h) --++ (-\h,0);
    \draw[line width=0.03cm] (1.5,2.7) --++ (0,\h) --++ (\h,0) --++ (0,-\h) --++ (-\h,0);
    \draw[line width=0.03cm] (2.7,0.3) --++ (0,\h) --++ (\h,0) --++ (0,-\h) --++ (-\h,0);
    \draw[line width=0.03cm] (2.7,1.5) --++ (0,\h) --++ (\h,0) --++ (0,-\h) --++ (-\h,0);
    \draw[line width=0.03cm] (2.7,2.7) --++ (0,\h) --++ (\h,0) --++ (0,-\h) --++ (-\h,0);
    
    \draw[dashed] (2.7,1.5) -- (5,0.9);
    \draw[dashed] (3.9,1.5) -- (6.2,0.9);
    \draw[dashed] (2.7,2.7) -- (5,2.1);
    \draw[dashed] (3.9,2.7) -- (6.2,2.1);
    \draw[fill=white, line width=0.03cm] (5,0.9) --++ (0,\h) --++ (\h,0) --++ (0,-\h) --++ (-\h,0);
    \node[anchor=center] at (5.6,1.5) {$k_{ij}$};
    \node[] at (7.1,2.1) {\scalebox{2}{$\rightsquigarrow$}};
    
    \def\x{8}
    \def\y{4.1}
    \def\h{1.5}
    \draw[pattern=north east lines, pattern color=black!50, draw=none, inner sep=0, outer sep=0] (\x,\y) rectangle (\x+0.33*\h,\y-0.33*\h);
    \draw[] (\x,\y) --++ (\h,0) --++ (0,-\h) --++ (-\h,0) --++ (0,\h);
    \draw[] (\x+0.33*\h,\y) --++ (0,-\h);
    \draw[] (\x+0.67*\h,\y) --++ (0,-\h);
    \draw[] (\x,\y-0.33*\h) --++ (\h,0);
    \draw[] (\x,\y-0.67*\h) --++ (\h,0);

    \node[] at (10, 3.35) {$+$};
    
    \def\x{10.5}
    \def\y{4.1}
    \def\h{1.5}
    \draw[pattern=north east lines, pattern color=black!50, draw=none, inner sep=0, outer sep=0] (\x+0.33*\h,\y) rectangle (\x+0.67*\h,\y-0.33*\h);
    \draw[] (\x,\y) --++ (\h,0) --++ (0,-\h) --++ (-\h,0) --++ (0,\h);
    \draw[] (\x+0.33*\h,\y) --++ (0,-\h);
    \draw[] (\x+0.67*\h,\y) --++ (0,-\h);
    \draw[] (\x,\y-0.33*\h) --++ (\h,0);
    \draw[] (\x,\y-0.67*\h) --++ (\h,0);

    \node[] at (12.5, 3.35) {$+$};
    \node[] at (13.375, 3.35) {$\cdots$};
    \node[] at (14.25, 3.35) {$+$};
    
    \def\x{14.75}
    \def\y{4.1}
    \def\h{1.5}
    \draw[pattern=north east lines, pattern color=black!50, draw=none, inner sep=0, outer sep=0] (\x+0.67*\h,\y-0.67*\h) rectangle (\x+\h,\y-\h);
    \draw[] (\x,\y) --++ (\h,0) --++ (0,-\h) --++ (-\h,0) --++ (0,\h);
    \draw[] (\x+0.33*\h,\y) --++ (0,-\h);
    \draw[] (\x+0.67*\h,\y) --++ (0,-\h);
    \draw[] (\x,\y-0.33*\h) --++ (\h,0);
    \draw[] (\x,\y-0.67*\h) --++ (\h,0);

    \node[] at (8.05, 0.85) {$+$};
    
    \def\x{8.55}
    \def\y{1.6}
    \def\h{1.5}
    \draw[pattern=north east lines, pattern color=black!50, draw=none, inner sep=0, outer sep=0] (\x,\y) rectangle (\x+0.67*\h,\y-0.33*\h);
    \draw[] (\x,\y) --++ (\h,0) --++ (0,-\h) --++ (-\h,0) --++ (0,\h);
    \draw[] (\x+0.33*\h,\y) --++ (0,-\h);
    \draw[] (\x+0.67*\h,\y) --++ (0,-\h);
    \draw[] (\x,\y-0.33*\h) --++ (\h,0);
    \draw[] (\x,\y-0.67*\h) --++ (\h,0);
    
    \node[] at (10.55, 0.85) {$+$};
    
    \def\x{11.05}
    \def\y{1.6}
    \def\h{1.5}
    \draw[pattern=north east lines, pattern color=black!50, draw=none, inner sep=0, outer sep=0] (\x,\y) rectangle (\x+0.33*\h,\y-0.33*\h);
    \draw[pattern=north east lines, pattern color=black!50, draw=none, inner sep=0, outer sep=0] (\x+0.67*\h,\y) rectangle (\x+\h,\y-0.33*\h);
    \draw[] (\x,\y) --++ (\h,0) --++ (0,-\h) --++ (-\h,0) --++ (0,\h);
    \draw[] (\x+0.33*\h,\y) --++ (0,-\h);
    \draw[] (\x+0.67*\h,\y) --++ (0,-\h);
    \draw[] (\x,\y-0.33*\h) --++ (\h,0);
    \draw[] (\x,\y-0.67*\h) --++ (\h,0);
    
    \node[] at (13.05, 0.85) {$+$};
    \node[] at (13.65, 0.85) {$\cdots$};
    \node[] at (14.25, 0.85) {$+$};
    
    \def\x{14.75}
    \def\y{1.6}
    \def\h{1.5}
    \draw[pattern=north east lines, pattern color=black!50, draw=none, inner sep=0, outer sep=0] (\x,\y) rectangle (\x+\h,\y-\h);
    \draw[] (\x,\y) --++ (\h,0) --++ (0,-\h) --++ (-\h,0) --++ (0,\h);
    \draw[] (\x+0.33*\h,\y) --++ (0,-\h);
    \draw[] (\x+0.67*\h,\y) --++ (0,-\h);
    \draw[] (\x,\y-0.33*\h) --++ (\h,0);
    \draw[] (\x,\y-0.67*\h) --++ (\h,0);

    \end{tikzpicture}}
  \caption{Block decomposition of the MNIST data set: Each image is divided into $9$ disjoint $4 \times 4$ blocks where the pixels at the margin are neglected since their corresponding values are zero for almost all images in the data set. Then, a kernel function is defined for each subgroup of the pixels. The kernel combinations encoded in the Gram matrix $G$~\eqref{eq: MNIST Gram matrix} are depicted on the right-hand side.}
  \label{fig: MNIST partition}
\end{figure}

\noindent Thus, we consider $9$ kernel functions $k_1, \dots, k_9$ corresponding to different subgroups of the pixels as shown in Figure~\ref{fig: MNIST partition}, compute the associated Gram matrices $G_1, \dots, G_9 \in \R^{m \times m}$ and define
\begin{equation}\label{eq: MNIST Gram matrix}
\begin{split}
  G = \frac{1}{511} \left( (G_1 + \mathds{1}) \odot \ldots \odot(G_9 + \mathds{1}) - \mathds{1} \right)= \frac{1}{511} \sum_{p \in I} G_1 ^{\odot p_1} \odot G_2^{ \odot p_2} \odot \ldots \odot G_9^{\odot p_9},
\end{split}
\end{equation}
where $\mathds{1} \in \R^{m \times m}$ denotes the matrix of ones and 
\begin{equation*}
 I = \{p = (p_1, \dots, p_9) \in [0,1]^9 \colon \text{at least one } p_i \neq 0)\}
\end{equation*}
is a set of binary multi-indices. By the notation $\odot p_i$ in~\eqref{eq: MNIST Gram matrix} we mean the element-wise exponentiation of the matrices $G_i$, i.e., $G^{\odot 0}_i = \mathds{1}$ and $G^{\odot 1}_i = G_i$. The matrix $G$ is the sum over all possible $q$-combinations, $q \geq 1$, of Hadamard products of the matrices $G_1, \dots, G_9$ as depicted in the right part of Figure~\ref{fig: MNIST partition}. Each combination represents the application of the kernel corresponding to the feature map~\eqref{eq: MNIST feature map} on a subgroup of the different blocks. That is, we consider
\begin{equation*}
 \sum_{q =1}^9 \begin{pmatrix} 9 \\ q \end{pmatrix} = 2^9 -1 = 511
\end{equation*}
different kernel functions since the product of kernels again forms a kernel, see~\cite{STEINWART2008}. The same holds for the sum of kernels. Let us denote the kernel function corresponding to the Gram matrix $G$ in~\eqref{eq: MNIST Gram matrix} by $k$.

In~\cite{STOUDENMIRE2016}, the proposed value for the kernel parameter is $\kappa = \pi / 2$. However, as discussed in~\cite{KLUS2019}, the optimal choice can vary. Thus, we consider different values for the kernel parameter $\kappa$ as well as for the threshold $\varepsilon$ in Algorithm~\ref{alg: sample reduction}. Since it is reasonable to assume that the feature vectors corresponding to different categories are linearly independent, we split the training set into the different classes and apply Algorithm~\ref{alg: sample reduction} to each category separately. The number of samples extracted by our method is shown in Figure~\ref{fig: MNIST results}~(a). 

\begin{figure}[htbp]
\centering
\begin{subfigure}{0.45\textwidth}
    \centering
    \caption{}
    \includegraphics[width=7cm]{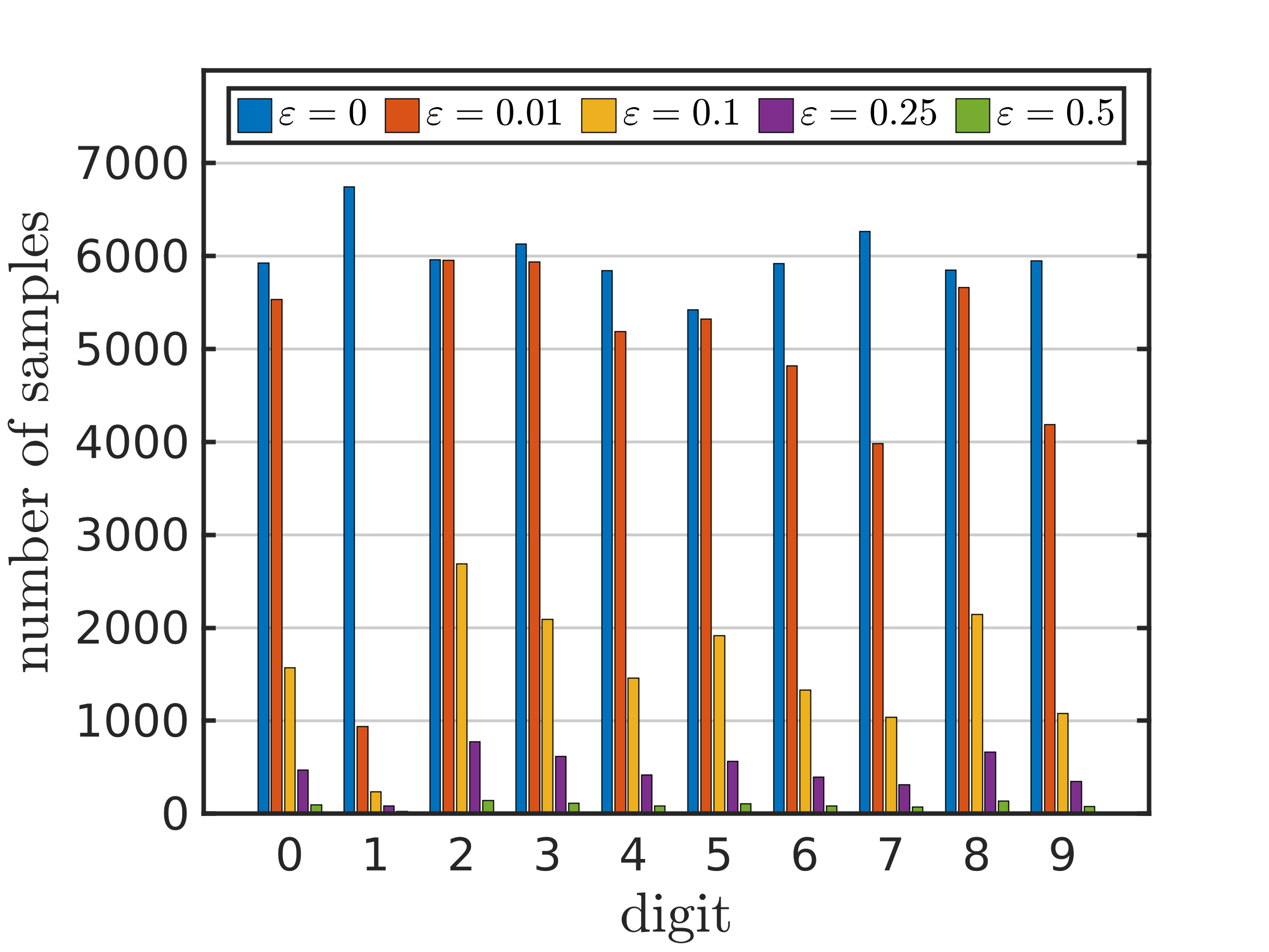}
\end{subfigure}%
\hspace*{0.75cm}
\begin{subfigure}{0.45\textwidth}
    \centering
    \caption{}
    \includegraphics[width=7cm]{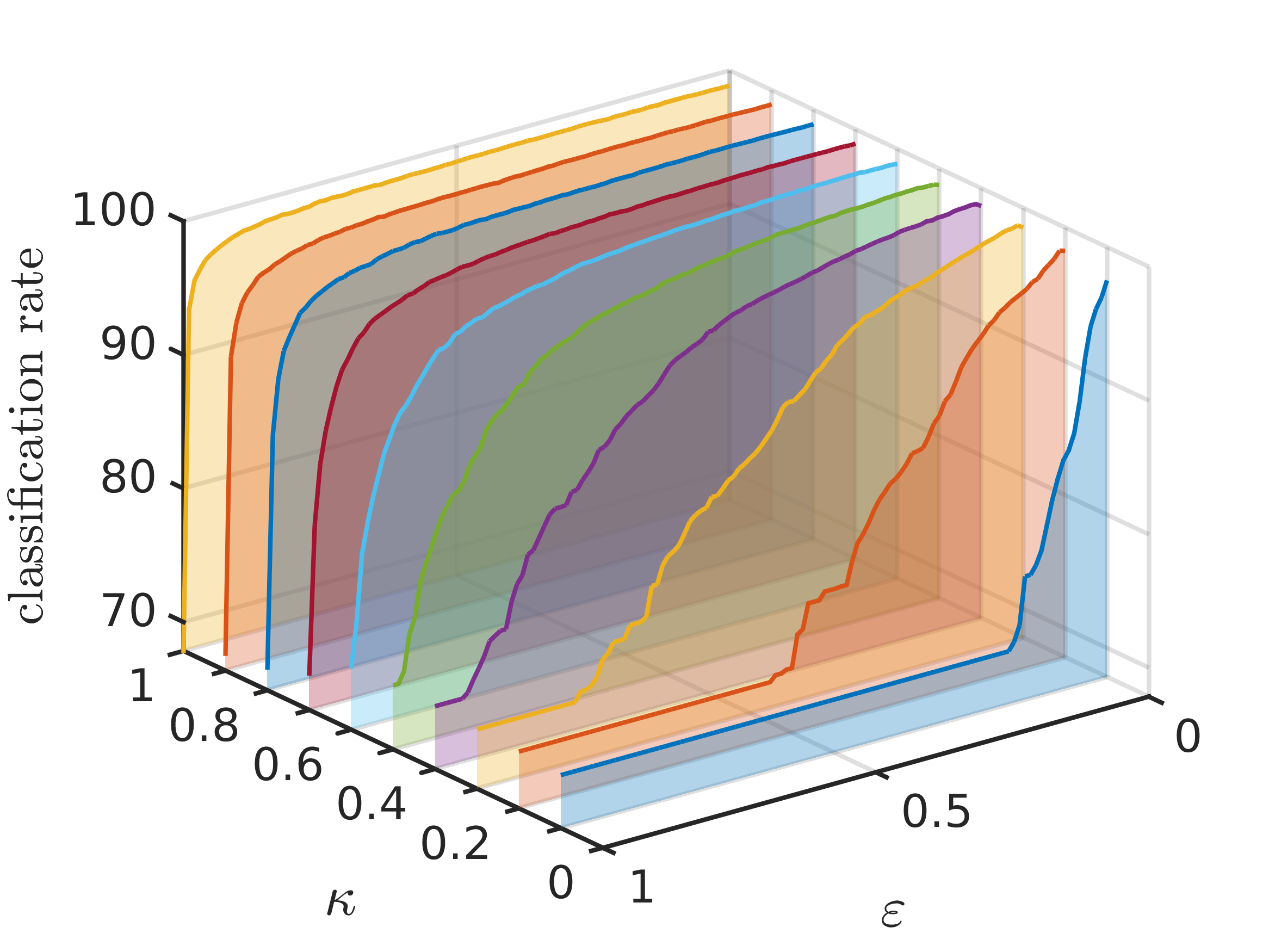}
\end{subfigure}%
\caption{Application of kFSA to the MNIST data set: (a) Number of extracted samples per digit for $\kappa=0.5$ and different thresholds. (b) Classification rates obtained on the test set as functions of $\varepsilon$ for different values of $\kappa$, both between $0$ and $1$. The intervals in which the classification rates are constant for small values of $\kappa$ indicate that only one sample for each category is extracted.}
\label{fig: MNIST results}
\end{figure}

The numbers of extracted samples for different thresholds show that the digit $1$ is subject to the least variation since even for thresholds close to zero only a relatively small number of samples remains. For instance, if we consider $\kappa=1$ and set $\varepsilon = 0.05$, nearly all samples from the training data set are extracted by Algorithm~\ref{alg: sample reduction} except for the samples corresponding to digit $1$ where only $1820$ out of $6742$ images are included in the set $\tilde{X}$. In contrast to that, the feature vectors of the samples corresponding to the digit $2$ show the highest degree of linear independence, i.e., a comparatively large number of samples is needed in order to reach the desired approximation errors~\eqref{eq: approximation error} among all respective feature vectors. Since the considered kernel is normalized, i.e., $k(x,x) = 1$ for all $x \in X$, the approximation errors are smaller than $1$ for any initial sample $x_0$~\eqref{eq: x_0} in a given category. Thus, only one sample is extracted per category if we choose a threshold of $\varepsilon=1$, whereas all given samples in the training set are taken into account if we set $\varepsilon=0$. In the latter case, the minimization problems~\eqref{eq: minimization problem 2} are ill-conditioned and additional regularization is necessary. Therefore, we apply \emph{kernel ridge regression}~\cite{MURPHY2012}, i.e., we solve the minimization problem 
\begin{equation*}
  \min_{\Theta \in \R^{d^prime \times m}} \lVert Y - \Theta \left(G_{X,X} + \gamma \mathrm{Id} \right) \rVert_F^2
\end{equation*}
with $\gamma=10^{-10}$. 

For the test phase, we solve the minimization problem~\eqref{eq: minimization problem 3} and construct the corresponding target function~\eqref{eq: minimization problem 3 - f}. Given a test vector $x$, the index of the largest entry of $f(x)$ (interpreted as a vector of likelihoods) then defines the predicted category. Figure~\ref{fig: MNIST results}~(b) shows the amount of correctly identified images in the test set for different kernel parameters and thresholds. As the kernel parameter approaches $1$, the classification rate tends to increase even for larger thresholds. On the one hand, we observe the effect that for a fixed threshold more and more samples are extracted with increasing kernel parameter. On the other hand, however, note that we achieve high classification rates using only a fraction of the training set instead the whole set. In fact, Table~\ref{tab: MNIST results} shows that in most cases the generalizability increases for some thresholds $\varepsilon>0$.

\begin{table}[h]
\renewcommand{\arraystretch}{1.3}
  \caption{Application of kFSA to the MNIST data set: For each choice of $\kappa$, we show the threshold $\varepsilon>0$ and the numbers of samples per digit corresponding to the highest classification rates (CR) obtained by applying Algorithm~\ref{alg: sample reduction}. Except for the digit $1$ -- where we get a classification rate of $97.53\%$ when using all $60000$ samples -- the results are better than the respective rates for $\varepsilon=0$.}
  \centering
  \begin{tabular}{c|c|cccccccccc|c}
    \hline
    \multirow{2}{*}{$\kappa$} & \multirow{2}{*}{$\varepsilon$} & \multicolumn{10}{c|}{number of samples per digit} & \multirow{2}{*}{CR}\\
    \cline{3-12}
     &  & $0$ & $1$ & $2$ & $3$ & $4$ & $5$ & $6$ & $7$ & $8$ & $9$ & \\
    \hline 
    
    $0.1$ & $0.01$ & $74$ & $41$ & $91$ & $81$ & $76$ & $82$ & $74$ & $69$ & $88$ & $73$ & $96.36\%$\\
    $0.2$ & $0.01$ & $510$ & $142$ & $751$ & $623$ & $473$ & $592$ & $443$ & $375$ & $669$ & $401$ & $98.39\%$\\
    
    $0.3$ & $0.01$ & $1892$ & $340$ & $3027$ & $2426$ & $1773$ & $2237$ & $1608$ & $1317$ & $2430$ & $1319$ & $98.82\%$\\
    
    $0.4$ & $0.02$ & $2763$ & $418$ & $4336$ & $3477$ & $2559$ & $3210$ & $2310$ & $1814$ & $3426$ & $1837$ & $98.95\%$\\
    
    $0.5$ & $0.04$ & $3301$ & $449$ & $4972$ & $4062$ & $3024$ & $3760$ & $2708$ & $2117$ & $3980$ & $2178$ & $98.99\%$\\
    
    $0.6$ & $0.07$ & $3747$ & $469$ & $5370$ & $4515$ & $3399$ & $4179$ & $3055$ & $2377$ & $4393$ & $2468$ & $99.01\%$\\
    
    $0.7$ & $0.16$ & $3137$ & $368$ & $4901$ & $3974$ & $2872$ & $3650$ & $2566$ & $1964$ & $3920$ & $2038$ & $98.98\%$\\
    
    $0.8$ & $0.19$ & $3923$ & $441$ & $5527$ & $4742$ & $3581$ & $4349$ & $3196$ & $2464$ & $4613$ & $2576$ & $98.99\%$\\
    
    $0.9$ & $0.19$ & $4951$ & $582$ & $5884$ & $5614$ & $4582$ & $5058$ & $4140$ & $3265$ & $5372$ & $3470$ & $98.98\%$\\
    
    $1.0$ & $0.27$ & $4927$ & $554$ & $5881$ & $5604$ & $4555$ & $5038$ & $4100$ & $3222$ & $5359$ & $3436$ & $98.98\%$\\
    
    \hline 
  \end{tabular}
  \label{tab: MNIST results}
\end{table}

For example, the highest classification rate of $99.01\%$ is obtained by only using approximately $57\%$ of the given training samples. Moreover, only $2035$ of the $60000$ samples are needed to achieve a classification rate larger than $98\%$ ($\kappa=0.6$, $\varepsilon=0.54$, CR$=98.03\%$). Thus, small numbers of samples, extracted by Algorithm~\ref{alg: sample reduction}, may be sufficient to obtain satisfying or even increased classification rates. The best result without applying kFSA, $98.93\%$, is obtained when setting $\kappa =0.7$.

\subsection{Fermi--Pasta--Ulam--Tsingou model}

As a second example for the application of kFSA, we consider the Fermi--Pasta--Ulam--Tsingou model, see \cite{FERMI1955}, which represents a vibrating string by a system of $d$ coupled oscillators fixed at the end points as shown in Figure~\ref{fig: fpu model}. 

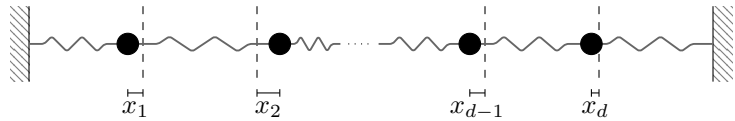
\begin{figure}[htbp]
\centering
\begin{tikzpicture}
\def\d{1.5}
\def\h{1}
\def\y{0}

\draw[pattern=north west lines, pattern color=black!50, draw=none, inner sep=0, outer sep=0] (5-3*\d, \y-0.5*\h) rectangle (5-3*\d-0.25,\y+0.5*\h);
\draw[pattern=north west lines, pattern color=black!50, draw=none, inner sep=0, outer sep=0] (5+3*\d, \y-0.5*\h) rectangle (5+3*\d+0.25,\y+0.5*\h);

\draw[] (5-3*\d,\y+0.5*\h) -- ++ (0,-\h);
\draw[dashed] (5-2*\d,\y+0.5*\h) -- ++ (0,-\h);
\draw[dashed] (5-1*\d,\y+0.5*\h) -- ++ (0,-\h);
\draw[dashed] (5+1*\d,\y+0.5*\h) -- ++ (0,-\h);
\draw[dashed] (5+2*\d,\y+0.5*\h) -- ++ (0,-\h);
\draw[] (5+3*\d,\y+0.5*\h) -- ++ (0,-\h);

\FPeval{\xa}{5-3*\d}
\FPeval{\xb}{5-2*\d+0.2-0.5}
\draw[rounded corners=1pt, draw=black!60, line width=0.025cm] (\xa,\y) -- ++ (0.166*\xb-0.166*\xa,0) -- ++ (0.0833*\xb-0.0833*\xa,0.1*\h) -- ++ (0.166*\xb-0.166*\xa,-0.2*\h) -- ++ (0.166*\xb-0.166*\xa,0.2*\h) -- ++ (0.166*\xb-0.166*\xa,-0.2*\h) -- ++ (0.0833*\xb-0.0833*\xa,0.1*\h) -- ++ (0.166*\xb-0.166*\xa,0);

\FPeval{\xa}{5-2*\d+0.2-0.3}
\FPeval{\xb}{5-1*\d-0.4+0.6}
\draw[rounded corners=1pt, draw=black!60, line width=0.025cm] (\xa,\y) -- ++ (0.166*\xb-0.166*\xa,0) -- ++ (0.0833*\xb-0.0833*\xa,0.1*\h) -- ++ (0.166*\xb-0.166*\xa,-0.2*\h) -- ++ (0.166*\xb-0.166*\xa,0.2*\h) -- ++ (0.166*\xb-0.166*\xa,-0.2*\h) -- ++ (0.0833*\xb-0.0833*\xa,0.1*\h) -- ++ (0.166*\xb-0.166*\xa,0);

\FPeval{\xa}{5-1*\d-0.4+0.8}
\FPeval{\xb}{5-1*\d-0.2+0.1+1.2}
\draw[rounded corners=1pt, draw=black!60, line width=0.025cm] (\xa,\y) -- ++ (0.166*\xb-0.166*\xa,0) -- ++ (0.0833*\xb-0.0833*\xa,0.1*\h) -- ++ (0.166*\xb-0.166*\xa,-0.2*\h) -- ++ (0.166*\xb-0.166*\xa,0.2*\h) -- ++ (0.166*\xb-0.166*\xa,-0.2*\h) -- ++ (0.0833*\xb-0.0833*\xa,0.1*\h) -- ++ (0.166*\xb-0.166*\xa,0);

\draw[dotted] (5-1*\d-0.2+0.1+1.2+0.1,\y) -- (5+1*\d-0.1-0.1-1.2-0.1,\y);

\FPeval{\xa}{5+1*\d-0.1-0.1-1.2}
\FPeval{\xb}{5+1*\d-0.2-0.1}
\draw[rounded corners=1pt, draw=black!60, line width=0.025cm] (\xa,\y) -- ++ (0.166*\xb-0.166*\xa,0) -- ++ (0.0833*\xb-0.0833*\xa,0.1*\h) -- ++ (0.166*\xb-0.166*\xa,-0.2*\h) -- ++ (0.166*\xb-0.166*\xa,0.2*\h) -- ++ (0.166*\xb-0.166*\xa,-0.2*\h) -- ++ (0.0833*\xb-0.0833*\xa,0.1*\h) -- ++ (0.166*\xb-0.166*\xa,0);

\FPeval{\xa}{5+1*\d-0.2+0.1}
\FPeval{\xb}{5+2*\d+0.3-0.5}
\draw[rounded corners=1pt, draw=black!60, line width=0.025cm] (\xa,\y) -- ++ (0.166*\xb-0.166*\xa,0) -- ++ (0.0833*\xb-0.0833*\xa,0.1*\h) -- ++ (0.166*\xb-0.166*\xa,-0.2*\h) -- ++ (0.166*\xb-0.166*\xa,0.2*\h) -- ++ (0.166*\xb-0.166*\xa,-0.2*\h) -- ++ (0.0833*\xb-0.0833*\xa,0.1*\h) -- ++ (0.166*\xb-0.166*\xa,0);

\FPeval{\xa}{5+2*\d+0.15-0.3}
\FPeval{\xb}{5+3*\d}
\draw[rounded corners=1pt, draw=black!60, line width=0.025cm] (\xa,\y) -- ++ (0.166*\xb-0.166*\xa,0) -- ++ (0.0833*\xb-0.0833*\xa,0.1*\h) -- ++ (0.166*\xb-0.166*\xa,-0.2*\h) -- ++ (0.166*\xb-0.166*\xa,0.2*\h) -- ++ (0.166*\xb-0.166*\xa,-0.2*\h) -- ++ (0.0833*\xb-0.0833*\xa,0.1*\h) -- ++ (0.166*\xb-0.166*\xa,0);

\draw[] (5-2*\d,\y-0.5*\h-0.15) -- (5-2*\d-0.2,\y-0.5*\h-0.15) node[below, midway] {$x_1$};
\draw[] (5-2*\d,\y-0.5*\h-0.15+0.05) -- ++ (0,-0.1);
\draw[] (5-2*\d-0.2,\y-0.5*\h-0.15+0.05) -- ++ (0,-0.1);

\draw[] (5-1*\d,\y-0.5*\h-0.15) -- (5-1*\d+0.3,\y-0.5*\h-0.15) node[below, midway] {$x_2$};
\draw[] (5-1*\d,\y-0.5*\h-0.15+0.05) -- ++ (0,-0.1);
\draw[] (5-1*\d+0.3,\y-0.5*\h-0.15+0.05) -- ++ (0,-0.1);

\draw[] (5+1*\d,\y-0.5*\h-0.15) -- (5+1*\d-0.2,\y-0.5*\h-0.15) node[below, midway] {$x_{d-1}$};
\draw[] (5+1*\d,\y-0.5*\h-0.15+0.05) -- ++ (0,-0.1);
\draw[] (5+1*\d-0.2,\y-0.5*\h-0.15+0.05) -- ++ (0,-0.1);

\draw[] (5+2*\d,\y-0.5*\h-0.15) -- (5+2*\d-0.1,\y-0.5*\h-0.15) node[below, midway] {$x_d$};
\draw[] (5+2*\d,\y-0.5*\h-0.15+0.05) -- ++ (0,-0.1);
\draw[] (5+2*\d-0.1,\y-0.5*\h-0.15+0.05) -- ++ (0,-0.1);

\node[shape=circle,inner sep=0.1cm, outer sep=0, draw, fill=black] at (5-2*\d-0.2,\y) {};
\node[shape=circle,inner sep=0.1cm, outer sep=0, draw, fill=black] at (5-1*\d+0.3,\y) {};
\node[shape=circle,inner sep=0.1cm, outer sep=0, draw, fill=black] at (5+1*\d-0.2,\y) {};
\node[shape=circle,inner sep=0.1cm, outer sep=0, draw, fill=black] at (5+2*\d-0.1,\y) {};

\end{tikzpicture}
\caption{Fermi--Pasta--Ulam--Tsingou model: Representation of a vibrating string by a set of masses coupled by springs.}
\label{fig: fpu model}
\end{figure}

The governing equations of this dynamical system are given by second-order differential equations of the form
\begin{equation}\label{eq: FPU}
    \ddot{x}_i = (x_{i+1} - 2 x_i + x_{i-1}) + \beta \left((x_{i+1} - x_i)^3 - (x_i - x_{i-1})^3 \right),
\end{equation}
for $i=1, \dots, d$, where we assume cubic coupling terms between the variables $x_i$ representing the displacements from the stationary positions of the oscillators, see Figure~\ref{fig: fpu model}. We set $x_0 = x_{d+1}=0$ since the end points of the chain are supposed to be fixed. The parameter $\beta \in \R$ defines the coupling strength. In \cite{GELSS2019}, we applied MANDy -- the tensor-based version of SINDy~\cite{BRUNTON2016} -- to randomly generated training data in order to recover the governing equations \eqref{eq: FPU}. Here, we want to consider the same problem, but apply kernel-based regression as described in Section~\ref{sec: data-reduced regression} using a polynomial kernel. That is, we consider the kernel function
\begin{equation}\label{eq: FPU kernel}
  k(x, x^\prime) = (\kappa + x^\top x^\prime)^q,
\end{equation}
with $\kappa=1$ and $q=3$. We construct the data matrices $X, Y \in \R^{d \times m}$ by varying the number of oscillators and generating random displacements in $[-0.1, 0.1]$ for each oscillator as well as the corresponding derivatives according to~\eqref{eq: FPU} with $\beta = 0.7$. Due to the random sampling, we expect the (implicitly given) feature space to be sampled sufficiently so that the feature vectors span the whole feature space. Thus, we try to use kFSA to determine the feature space dimension $n$. Figure~\ref{fig: FPU results} shows the number of indices that are extracted by Algorithm~\ref{alg: sample reduction} when applied to randomly generated training data with different dimensions $d$ and a threshold of $10^{-10}$.

\begin{figure}[htbp]
\centering
\begin{subfigure}{0.45\textwidth}
    \centering
    \caption{}
    \includegraphics[height=5.25cm]{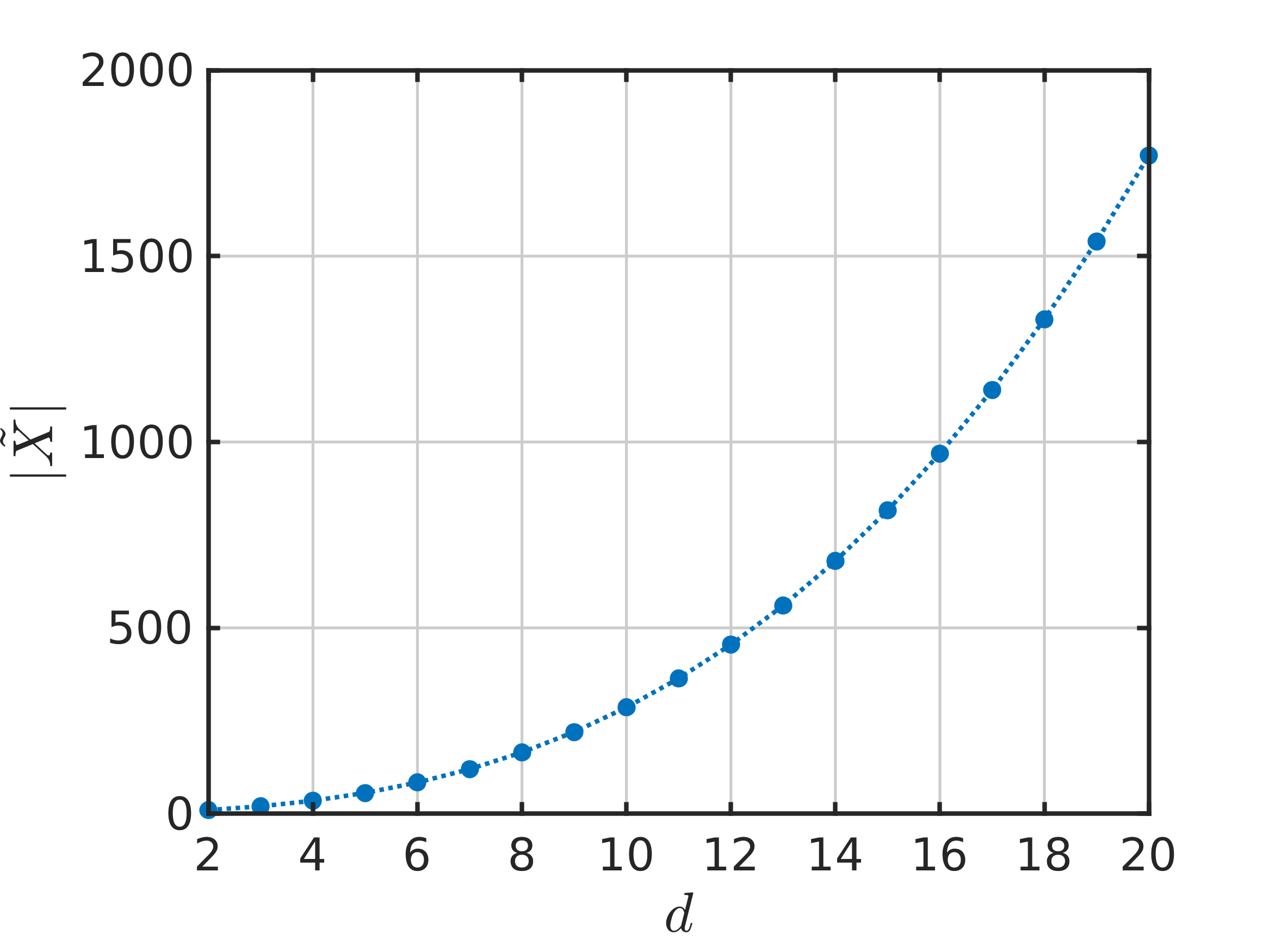}
\end{subfigure}%
\hspace*{0.052\textwidth}
\begin{subfigure}{0.45\textwidth}
    \centering
    \caption{}
    \includegraphics[height=5.25cm]{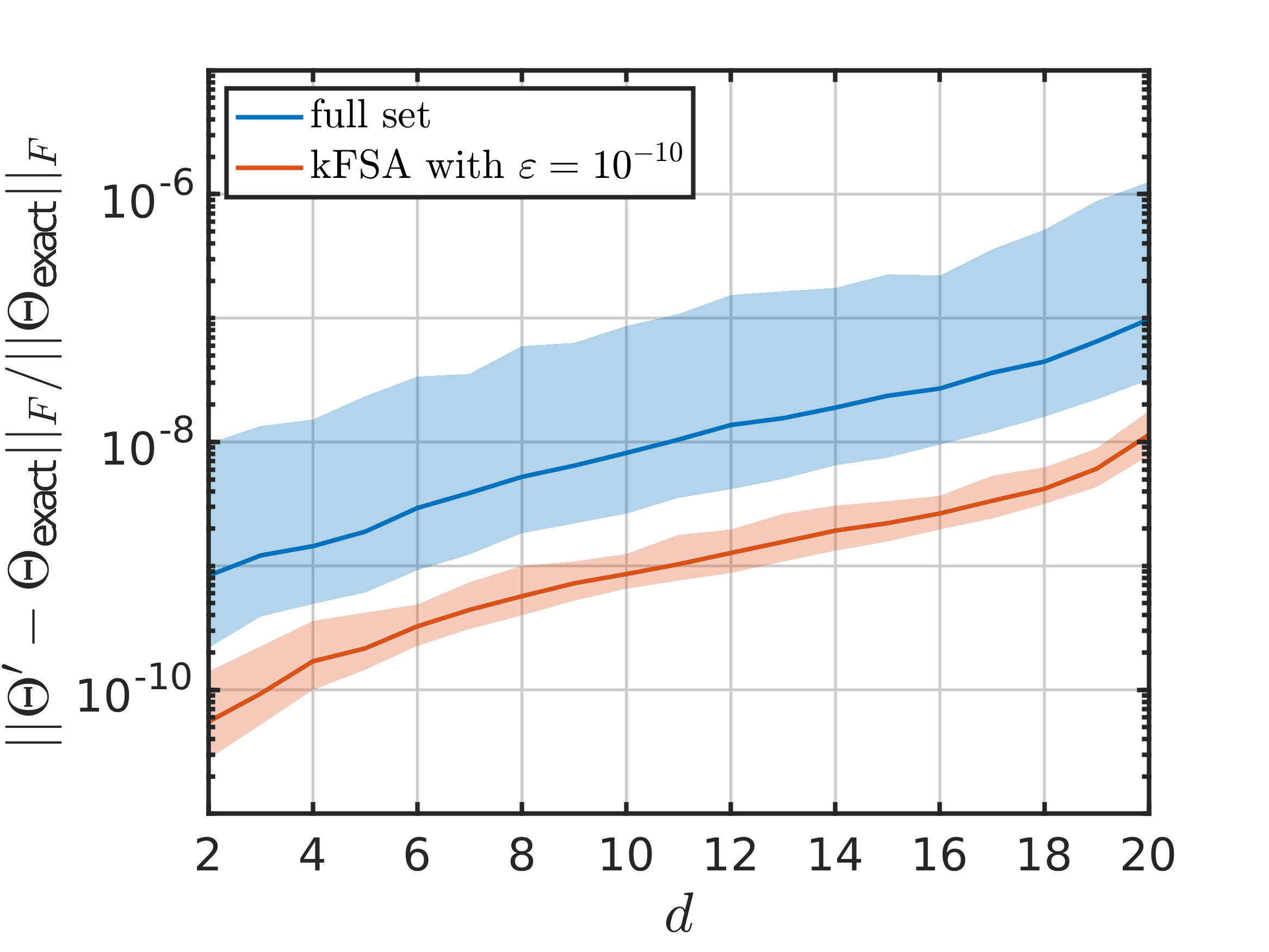}
\end{subfigure}%
\caption{Application of kFSA to the Fermi--Pasta--Ulam--Tsingou problem: (a) Number of samples extracted by Algorithm~\ref{alg: sample reduction} when applied to $2000$ randomly generated $d$-dimensional displacement vectors. (b) Median of the relative errors of the approximate coefficient matrices over $d$ for the full set as well as the reduced set with $1000$ realizations for the former and $100$ realizations for latter case. The semi-transparent areas comprise the $5$th to the $95$th percentile of the respective results.}
\label{fig: FPU results}
\end{figure}

As we can see in Figure~\ref{fig: FPU results}~(a), our method extracts less than $2000$ samples for each dimension $d$ up to 20. Note that for different realizations of $X$, the set $\tilde{X}$ varies but the number of samples $|\tilde{X}|$ remains unchanged. The cubic dependency of the number of important samples on $d$ can be explained as follows: For the polynomial kernel, we can directly write down an explicit expression of a feature map. As shown in, e.g., \cite{ZAKI2014}, the kernel~\eqref{eq: FPU kernel} can be expressed as
\begin{equation}\label{eq: FPU feature map}
  k(x, x^\prime) = \sum_{|p| = q} \left( \sqrt{a_{p}} \prod_{i=1}^d x_i^{p_i}\right) \left( \sqrt{a_{p}} \prod_{i=1}^d {x^\prime_i}^{p_i}\right) = \Psi(x)^\top \Psi(x^\prime),
\end{equation}
where $p = (p_0, p_1, \dots, p_d) \in \N_0^{d+1}$ is a multi-index and $|p| = p_0 + \dots + p_d$. The prefactors  $a_{p}$ are given in terms of multinomial coefficients:
\begin{equation}\label{eq: FPU prefractors}
  a_{p} = \begin{pmatrix} q \\ p \end{pmatrix} \alpha^{p_0} = \frac{q!}{p_0 ! p_1 ! \cdots p_d !} \alpha^{p_0}.
\end{equation}
Thus, the feature vector $\Psi(x)$ in \eqref{eq: FPU feature map} contains all monomials of order up to and including three in the variables $x_i$, scaled by the square roots of the prefactors defined in \eqref{eq: FPU prefractors}. The dimension of the feature space is then given by the number of all possible vectors $p$ with $| p | = q$, which is
\begin{equation*}
 n = \begin{pmatrix} d+q \\ q \end{pmatrix}.
\end{equation*}
For $q=3$, this reduces to
\begin{equation*}
 n = \frac{1}{6} (d+1)(d+2)(d+3),
\end{equation*}
which are the exact same numbers of samples extracted by our method for different dimensions $d$. That is, Algorithm~\ref{alg: sample reduction} detects the feature space dimension by choosing the correct amount of feature vectors to form a basis, given in form of column subsets $\tilde{X} \subset X$. As described in Section~\ref{sec: data-reduced regression}, we can use the transformed data matrix $\Psi_{\tilde{X}}$ to reconstruct the coefficient matrix~\eqref{eq: coefficient matrix 1} representing the solution of~\eqref{eq: minimization problem 1}, i.e., to recover the governing equations. For a given set $\tilde{X} \subseteq X$, the regression function is defined as
\begin{equation*}
  f(x) = \Theta \Psi_{\tilde{X}}^\top \Psi(x) = \underbrace{\Theta \Psi_{\tilde{X}}^\top D}_{= \Theta^\prime} \underbrace{D^{-1} \Psi(x)}_{=\Psi^\prime(x)},
\end{equation*}
where $\Theta$ is the solution of \eqref{eq: minimization problem 3} and $D$ is a diagonal matrix containing the corresponding prefactors given in~\eqref{eq: FPU prefractors}. $\Psi^\prime(x)$ then only contains monomials with prefactor $1$ and the exact coefficient matrix defined by~\eqref{eq: FPU} becomes
\begin{equation*}
 \Theta_\text{exact} = 
 \kbordermatrix{
 ~ & 1 & x_1 & x_1^2 &  x_1^3 & x_2 & x_1 x_2 & x_1^2 x_2 &        & x_{d-2} x_d^2 & x_{d-1} x_d^2 & x_d^3\\
 ~ & 0 & -2  &     0 &   -1.4 &   1 &       0 &       2.1 &    ~   &             0 &             0 &    0~\\
 ~ & 0 &  1  &     0 &    0.7 &  -2 &       0 &      -2.1 & \cdots &             0 &             0 &    0~\\
 ~ & 0 &  0  &     0 &      0 &   1 &       0 &         0 &    ~   &             0 &             0 &    0~\\
 ~ & ~ &  ~  &     ~ & \vdots &   ~ &       ~ &         ~ & \ddots &             ~ &        \vdots &    ~~\\
 ~ & 0 &  0  &     0 &      0 &   0 &       0 &         0 &    ~   &             0 &             0 &    0~\\
 ~ & 0 &  0  &     0 &      0 &   0 &       0 &         0 & \cdots &             0 &          -2.1 &  0.7~\\
 ~ & 0 &  0  &     0 &      0 &   0 &       0 &         0 &    ~   &             0 &           2.1 & -1.4~\\
 }.
\end{equation*}
For the considered values of $d$ and randomly generated training sets, Figure~\ref{fig: FPU results}~(b) shows that the relative approximation error in the Frobenius norm between $\Theta^\prime$ and the exact coefficient matrix is more stable and approximately one order of magnitude lower for the reduced set than for the full set. Particularly, we observe more extreme outliers in the results for $\varepsilon=0$, where we do not apply any regularization technique, than for $\varepsilon=10^{-10}$. This indicates that the set generated by kFSA can be used to improve system identification performance since, in addition to reduced numerical costs, we also benefit from regularization effects.

\subsection{CalCOFI}\label{sec: CalCOFI}

The third example is an analysis of time series data taken from the \emph{CalCOFI} (California Cooperative Oceanic Fisheries Investigations) data base~\cite{CALCOFI2020}. CalCOFI is a partnership of several institutions founded in 1949 to investigate the sardine collapse off California and today provides one of the longest-running time series that exist. These data -- containing different seawater measurements such as depth, pressure, and temperature -- are the most reliable oceanographic time series available. For the demonstration of our method, we randomly extract a training and a test set from CalCOFI's final upcast CTD (Conductivity, Temperature, Depth) data.\!\footnote{\url{https://calcofi.org/data/ctd/455-ctd-data-files.html}} The training set contains $25000$ measurement vectors while the test set contains $5000$ samples. Various relationships in the data have been examined using supervised learning techniques, see for instance~\cite{ALIN2012, KIM2015, SAKAMOTO2017}. In this work, the aim is to find a regression function that predicts the dissolved oxygen in the seawater as a function of the depth, pressure, temperature, and salinity.

Without any knowledge of the relationship between the input and output variables, we choose Gaussian kernels for the kernel-based regression, i.e.,
\begin{equation*}
  k(x, x^\prime) = \exp \left( -\kappa \lVert x-x^\prime \rVert_2^2 \right)
\end{equation*}
with $x, x^\prime \in \mathcal{S} \subset \R^4$ and kernel parameter $\kappa >0$. Since the input variables live on different scales, we normalize these measurements in order to avoid the effect of larger-scale variables dominating the others. That is, given a training data matrix ${X = [x^{(1)}, \dots, x^{(m)}] \in \R^{4 \times m}}$ we apply \emph{min-max normalization} in the form of
\begin{equation*}
   \hat{X}_{i, j} = \hat{x}^{(j)}_i = \frac{x^{(j)}_i - l_i}{u_i - l_i} \in [0,1]
\end{equation*}
for $i=1, \dots, 4$ and $j = 1,\dots, m$, where 
\begin{equation*}
  l_i = \min_{j \in \{1, \dots, m\}} x^{(j)}_i = \min X_{i,:} \qquad \text{and} \qquad u_i = \max_{j \in \{1, \dots, m\}} x^{(j)}_i = \max X_{i,:}.
\end{equation*}
In the test phase, we then normalize any given vector by using the same constants before applying the regression function $f \colon \R^4 \to \R$ to predict the oxygen concentration.

\begin{figure}[h]
\centering
\begin{subfigure}{0.45\textwidth}
    \centering
    \caption{}
    \includegraphics[height=5.25cm]{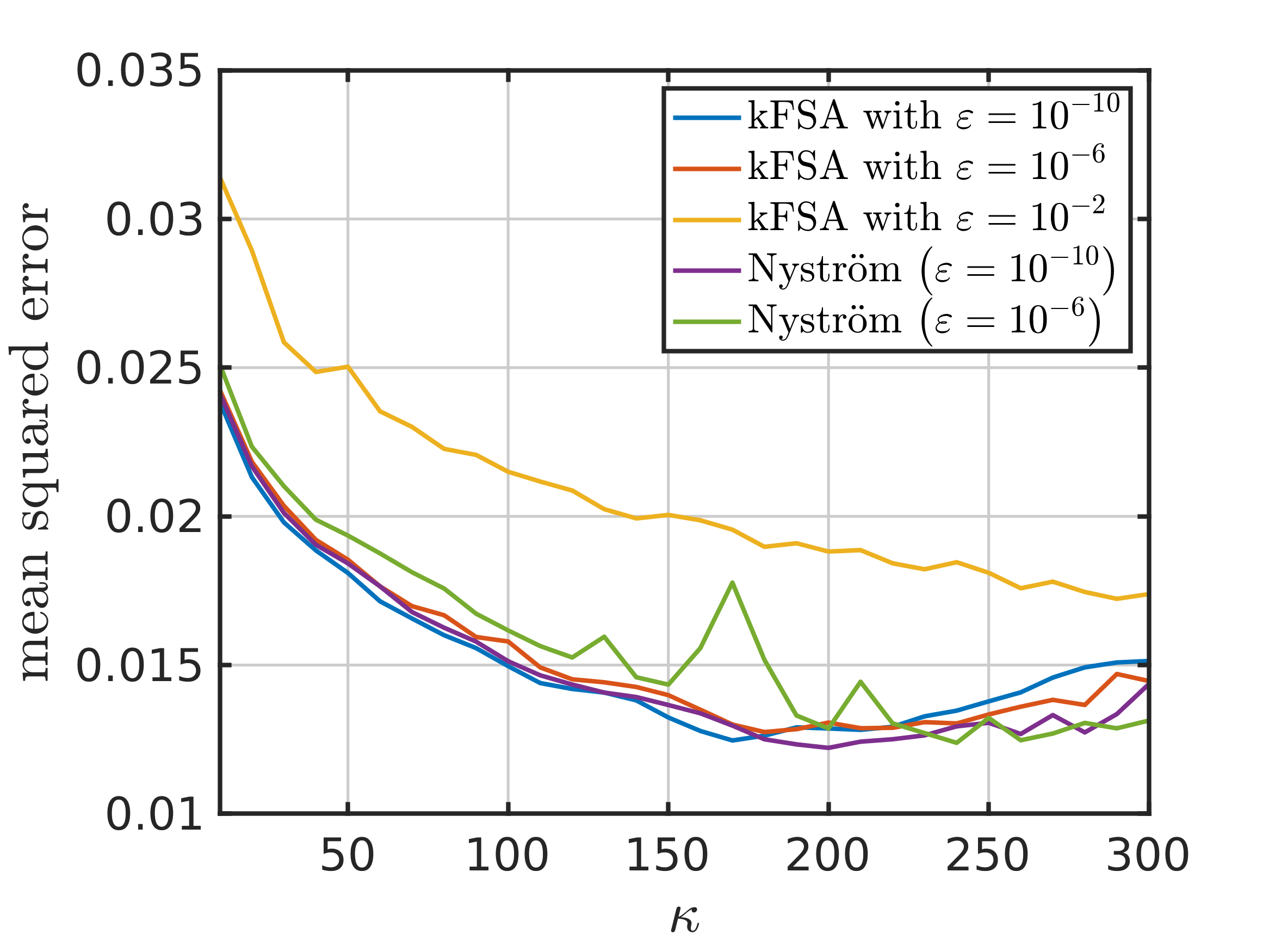}
\end{subfigure}%
\hspace*{0.052\textwidth}
\begin{subfigure}{0.45\textwidth}
    \centering
    \caption{}
    \includegraphics[height=5.25cm]{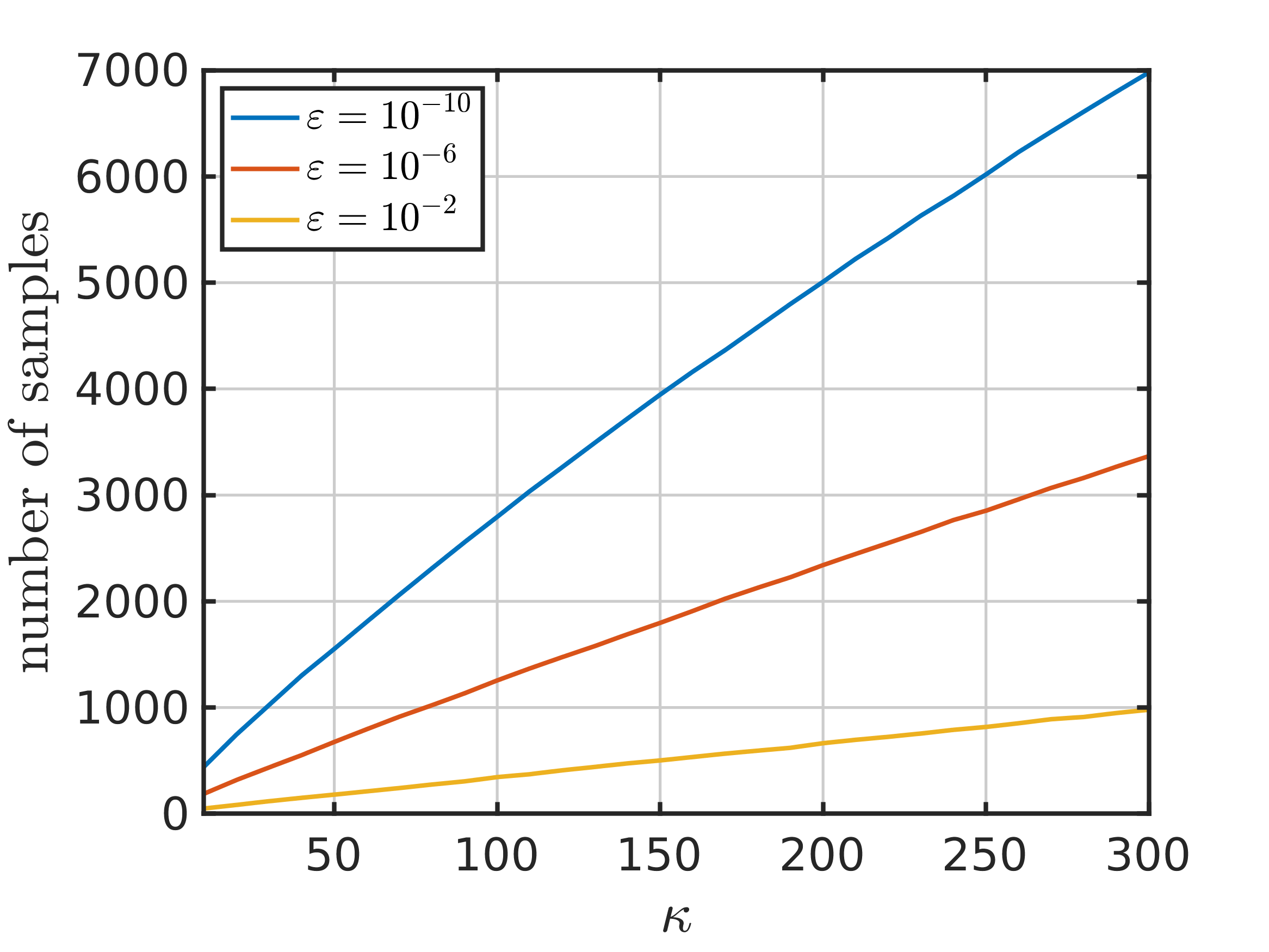}
\end{subfigure}\\%
\begin{subfigure}{\textwidth}
    \centering
    \caption{}
    \includegraphics[width=14.5cm]{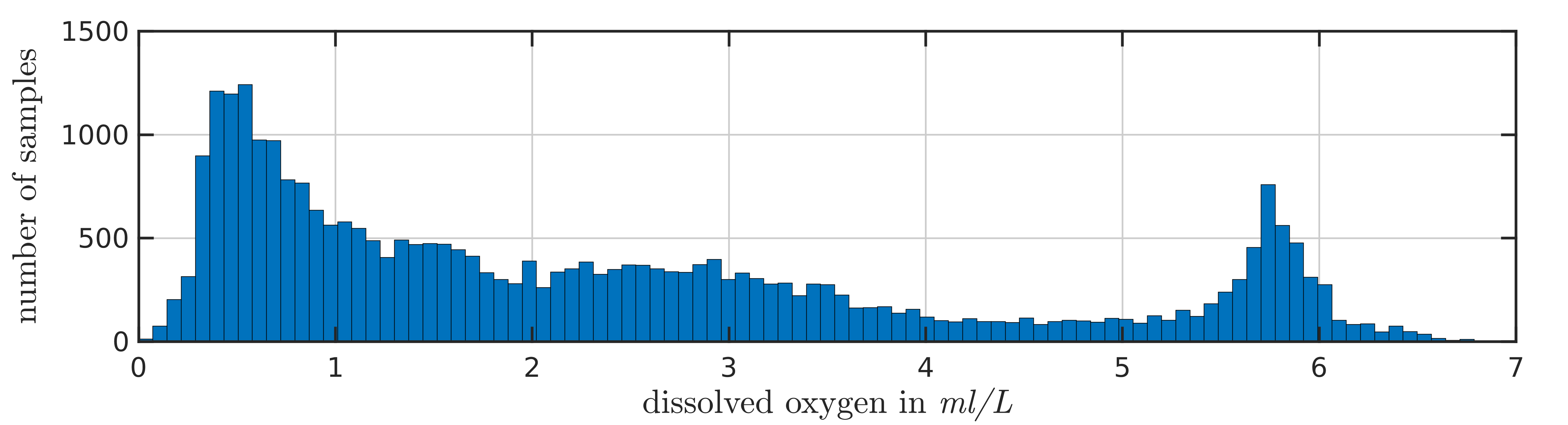}
\end{subfigure}%
\caption{Application of kFSA method to the CalCOFI data set: (a) Mean squared errors in the predicted dissolved oxygen values for different thresholds and kernel parameters. (b) Number of samples extracted by Algorithm~\ref{alg: sample reduction} for different thresholds and kernel parameters. (c) Histogram of the output data (including training and test set).}
\label{fig: CalCOFI results}
\end{figure}

The feature space corresponding to the Gaussian kernel is infinite-dimensional and a Gram matrix is always non-singular for mutually different samples. That is, given $X = [x^{(1)}, \dots, x^{(m)}]$ with ${x^{(i)} \neq x^{(j)}}$ for $i \neq j$, the matrix $G_{X,X}$ has rank $m$. In particular, all submatrices $G_{\tilde{X}, X}$ would therefore have rank $\tilde{m} = |\tilde{X}|$. However, our experiments show that the computed Gram matrices corresponding to subsets $\tilde{X} \subset X$ extracted by kFSA are still ill-conditioned. Thus, we use additional regularization in order to construct well-posed systems of linear equations. For the extraction of relevant samples, we will employ Algorithm~\ref{alg: sample reduction} with different thresholds $\varepsilon$ as well as, for comparison, the Nyström method described in Section~\ref{sec: analytical formulation}. Suppose we have found a subset $\tilde{X} \subset X$, we then solve the minimization problem~\eqref{eq: minimization problem 3} by constructing a regularized system of linear equations
\begin{equation*}
  \tilde{\Theta} \left( G_{\tilde{X}, X} G_{X, \tilde{X}} + \gamma \mathrm{Id} \right) = Y G_{X, \tilde{X}},
\end{equation*}
where we used the normal equation of~\eqref{eq: minimization problem 3}. Here, the (row) vector $Y \in \R^m$ contains the oxygen levels corresponding to the samples given in the matrix $X$. The regression function is then defined as
\begin{equation*}
  f(x) = \tilde{\Theta} G_{\tilde{X}, \hat{x}},
\end{equation*}
see~\eqref{eq: minimization problem 3 - f}, where $\hat{x}$ denotes the normalized version of $x$ as described above. The results in form of the mean squared errors in the output variables for $\gamma = 10^{-10}$ and different values for $\kappa$ as well as $\varepsilon$ are shown in Figure~\ref{fig: CalCOFI results}~(a).

The mean squared errors lie between $0.0125$ and $0.0314$ for any choice of the parameters $\kappa$ and $\varepsilon$. As we can see in Figure~\ref{fig: CalCOFI results}~(c), the calculated errors are negligibly small in comparison to the majority of the oxygen measurements in the CalCOFI data set. The results for $\varepsilon= 10^{-10}$ and $\varepsilon= 10^{-6}$ are quite similar, which indicates that even for smaller thresholds we will not get significantly lower errors. This assumption is also supported by the mean squared errors corresponding to the Nyström method proposed in~\cite{MUSCO2017}, which is provably accurate for any kernel. In order to compare both methods, we used the Nyström method\,\footnote{\url{https://github.com/cnmusco/recursive-nystrom}} to extract subsets of the same size as obtained by applying kFSA with $\varepsilon= 10^{-10}$ and $\varepsilon= 10^{-6}$, respectively, see Figure~\ref{fig: CalCOFI results}~(b). If we allow the same amount of samples as determined by Algorithm~\ref{alg: sample reduction} with $\varepsilon=10^{-10}$, the results of the Nyström method are comparable to the results of Algorithm~\ref{alg: sample reduction} for thresholds $\varepsilon=10^{-10}$ and $\varepsilon=10^{-6}$. If we only allow as many samples as determined by Algorithm~\ref{alg: sample reduction} with $\varepsilon=10^{-6}$, we see that the errors are slightly larger and erratic. For smaller numbers of samples, the results obtained by applying the Nyström method start to oscillate -- even worse than they already do for $\varepsilon=10^{-6}$. That is, for small thresholds Algorithm~\ref{alg: sample reduction} extracts subsets $\tilde{X}$ which are more suitable for a numerically stable regression than the subsets extracted by the Nyström method. As shown in Figure~\ref{fig: CalCOFI results}~(b), the number of extracted samples depends sublinearly on the kernel parameter $\kappa$. For the considered parameter combinations, that number is always smaller than $7000$, which corresponds to less than $30\%$ of the given training data. 

Note that further improvements in terms of computational complexity of kFSA will be subject to future research. At this point, the runtime of kFSA is not competitive. For instance, the Nyström method we used for comparison requires only $O(m M^2)$ operations instead of $O(m^2 + m M^2)$, where $M$ is the number of extracted samples, see Lemma~\ref{lemma: kFSA complexity}.

\section{Conclusion \& outlook}\label{sec: Conclusion}

We have proposed a data-reduction approach for supervised learning tasks. The resulting method -- called \emph{kernel-based feature space approximation} or, in short, kFSA -- extracts relevant samples from a given data set whose corresponding feature vectors can be used as a (approximate) basis of the feature space spanned by the transformations of all samples up to an arbitrary accuracy. The aim is reduce the storage consumption and computational complexity as well as to achieve a regularization effect that improves the generalizability of the learned target functions. The introduced method is purely written in terms of kernel functions, which allows us to avoid explicit representations of high-dimensional feature maps and even to consider infinite-dimensional feature spaces. 

The performance of kFSA was demonstrated using examples of classification and regression problems from different scientific areas such as image recognition, system identification, and time-series analysis. In our experiments, we observed that the computational effort may be reduced significantly while obtaining even better results than when considering all training data in the learning phase. Moreover, we showed that kFSA is able to detect the correct dimension of a feature space that is only implicitly given by a kernel. We also found cases where sample subsets extracted by kFSA can lead to numerically more stable results than subsets found by a state-of-the-art implementation of the Nyström method.

Future research will include the application of kFSA to a broader spectrum of problems in the field of data-driven analysis of dynamical systems, e.g., transfer operator approximation, model reduction, and system identification. Other open issues are algorithmic improvements for the bottom-up construction of relevant subsets, imposing additional or alternative constraints such as sparsity of the coefficient matrices, and the artificial generation of input-output pairs to further reduce the number of needed basis vectors in the feature space.

\section*{Acknowledgments}

This research has been funded by Deutsche Forschungsgemeinschaft (DFG) through grant CRC 1114 \emph{``Scaling Cascades in Complex Systems''} (project ID: 235221301, project B06).

{\small{}\bibliographystyle{unsrturl}
\bibliography{references}
}{\small\par}

\end{document}